\documentclass[conference]{IEEEtran}
\IEEEoverridecommandlockouts
\usepackage{amsmath,amssymb}
\usepackage{amsthm}
\usepackage{float}
\usepackage{algorithm}
\usepackage{algpseudocode}
\usepackage{booktabs}
\usepackage{multirow}
\usepackage{xcolor}
\usepackage{colortbl}
\usepackage{graphicx} 
\usepackage{hyperref}
\usepackage{authblk}

\hypersetup{
  colorlinks=true,
  linkcolor=red,
  citecolor=green!50!black,
  urlcolor=blue
}

\providecommand{\Description}[1]{}

\theoremstyle{plain}
\newtheorem{proposition}{Proposition}
\newtheorem{corollary}{Corollary}
\def\BibTeX{{\rm B\kern-.05em{\sc i\kern-.025em b}\kern-.08em
    T\kern-.1667em\lower.7ex\hbox{E}\kern-.125emX}}
\begin{document}

\title{\fontsize{23}{27}\selectfont
TopoMamba: Topology-Aware Scanning and Fusion for\\
Segmenting Heterogeneous Medical Visual Media}

\author[12]{Fuchen Zheng}
\author[3]{Chengpei Xu}
\author[3]{Long Ma}
\author[6]{Weixuan Li}
\author[6]{Junhua Zhou}
\author[7]{Xuhang Chen}
\author[1]{Weihuang Liu}
\author[4]{\authorcr Haolun Li}
\author[6]{Quanjun Li}
\author[5]{Zhenxi Zhang}
\author[1]{Lei Zhao}
\author[1$\dagger$]{Chi-Man Pun\thanks{$\dagger$ Corresponding Author.}}
\author[2$\dagger$]{Shoujun Zhou}
\affil[1]{University of Macau}
\affil[2]{Shenzhen Institutes of Advanced Technology, Chinese Academy of Sciences}
\affil[3]{Dalian University of Technology}
\affil[4]{Nanjing University of Posts and Telecommunications}
\affil[5]{Hong Kong Polytechnic University}
\affil[6]{Guangdong University of Technology}
\affil[7]{Huizhou University}

\maketitle

\begin{abstract}
Visual state-space models (SSMs) have shown strong potential for medical image segmentation, yet their effectiveness is often limited by two practical issues: axis-biased scan ordering weakens the modeling of oblique and curved structures, and naive multi-branch fusion tends to amplify redundant responses. We present TopoMamba, a topology-aware scan-and-fuse framework for segmenting heterogeneous medical visual media. The method combines a diagonal/anti-diagonal TopoA-Scan branch with the standard Cross-Scan branch to provide complementary structural priors, and introduces ScanCache, a device-aware caching mechanism that amortizes explicit scan-index construction across recurring resolutions. To fuse heterogeneous scan features efficiently, we further propose a lightweight HSIC Gate that regulates branch interaction using a dependence-aware scalar gating rule. We also instantiate a volumetric TopoMamba-3D for practical 3D clinical segmentation. Experiments on Synapse CT, ISIC 2017 dermoscopy, and CVC-ClinicDB endoscopy show that TopoMamba consistently improves segmentation quality over strong CNN, Transformer, and SSM baselines, with particularly clear gains on thin or curved targets such as the pancreas and gallbladder, while maintaining favorable deployment efficiency under dynamic input resolutions. These results suggest that topology-aware scan ordering and lightweight dependence-aware fusion form an effective and practical design for medical multimedia segmentation. The code will be made publicly available.
\end{abstract}

\begin{IEEEkeywords}
medical image segmentation, state space models, visual media, scan ordering, feature fusion
\end{IEEEkeywords}
\begin{figure*}[ht]
\centering
\includegraphics[width=1\textwidth]{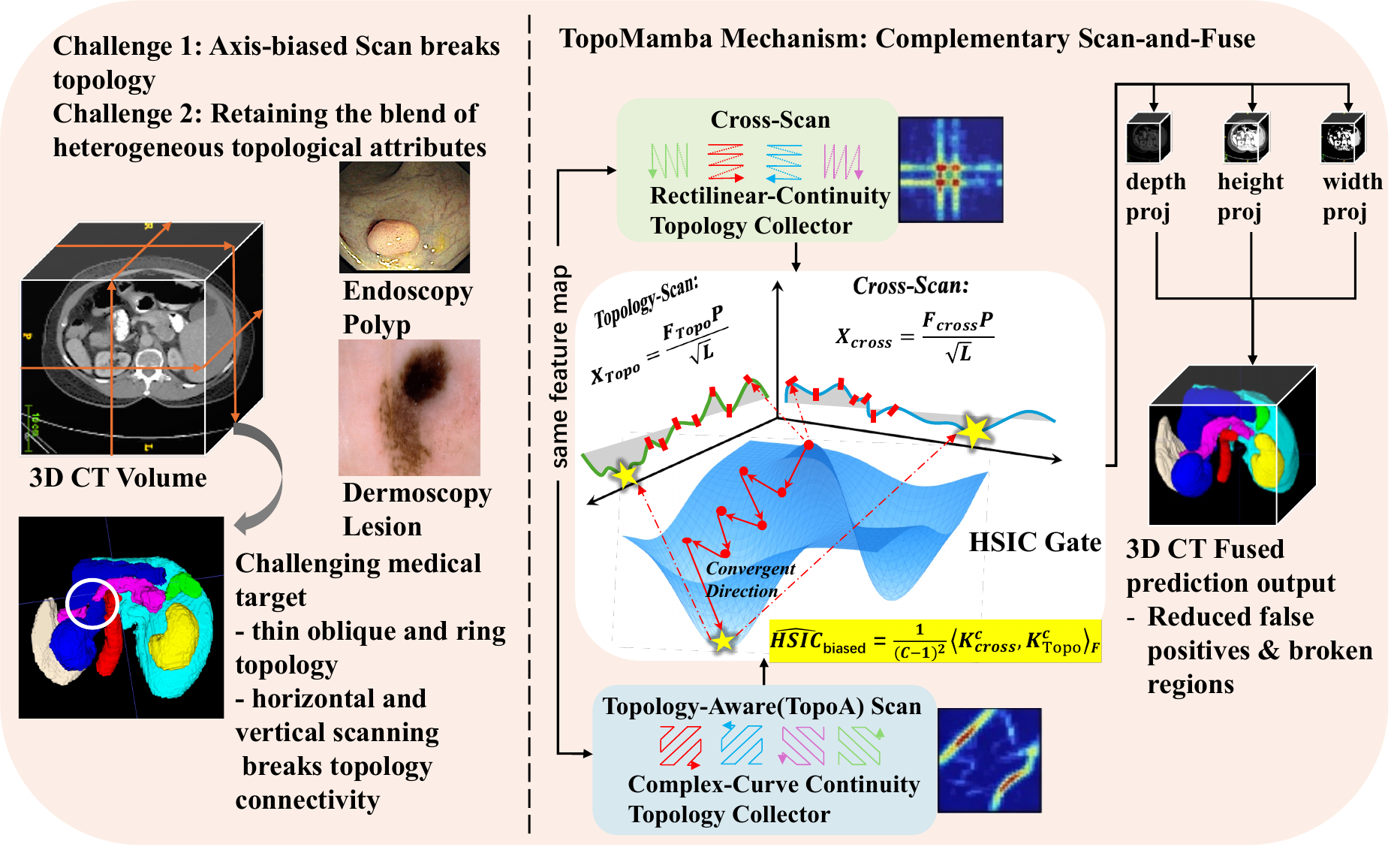}
\caption{Motivation and overview of \textbf{TopoMamba}. Left: axis-biased scanning disrupts non-axial structural continuity in 3D CT, endoscopy, and dermoscopy. Right: Cross-Scan and TopoA-Scan are fused by the HSIC Gate to better preserve non-axial continuity and suppress false positives.}
\label{fig1}
\end{figure*}

\begin{figure*}[ht]
\centering
\includegraphics[width=\linewidth]{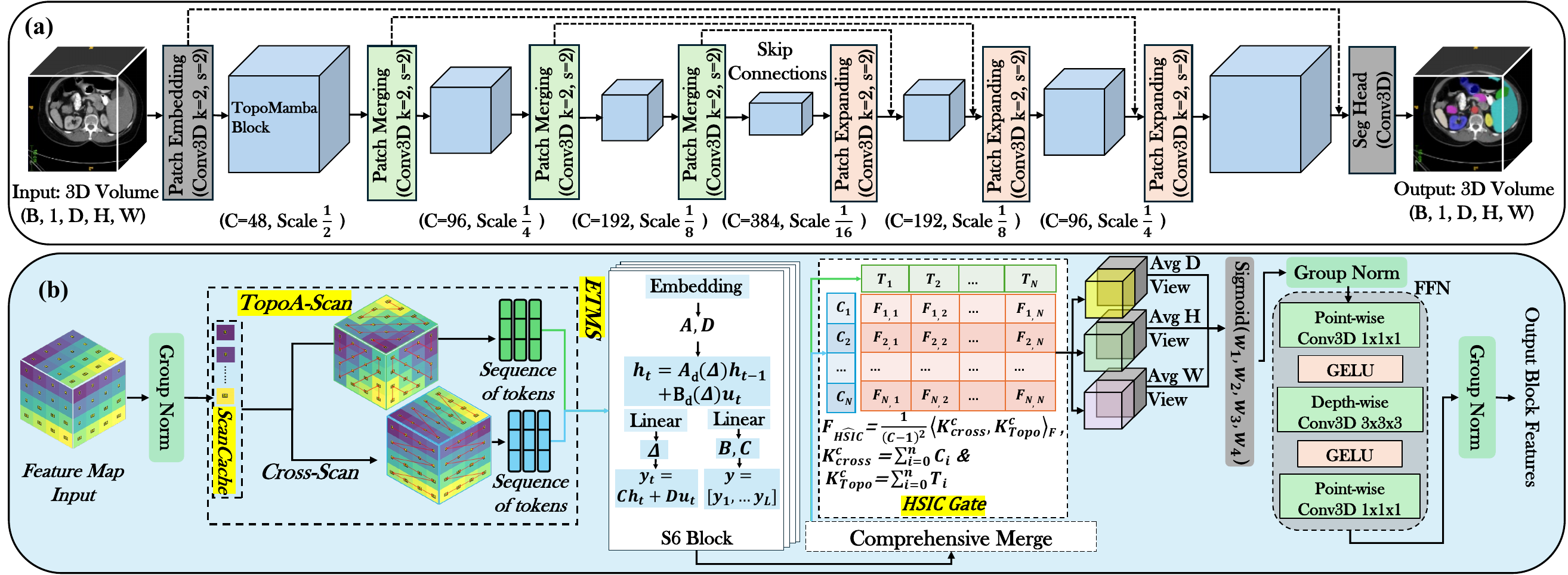}
\caption{Architecture of \textbf{TopoMamba-3D}. (a) 3D U-shaped segmentation network with patch embedding, hierarchical TopoMamba blocks, patch merging and expanding, and skip connections. (b) TopoMamba block with TopoA-Scan and Cross-Scan coupled via ScanCache, state-space sequence modeling, HSIC Gate fusion, view-aware reweighting, and a 3D feed-forward module.}
\label{fig2}
\end{figure*}

\section{Introduction}
\label{sec:intro}

Clinical segmentation systems must handle geometric complexity and deployment pressure at the same time. Abdominal CT media often contain elongated and oblique anatomical structures. Dermoscopic media usually exhibit ambiguous lesion boundaries. Endoscopic media further introduce severe artifacts, such as specular highlights and motion blur~\cite{89medsam_nature2024segment,110MedicalIMGSeg}. Moreover, hospital workflows often revisit fixed resolutions under strict latency constraints. Once deployed, an accurate model that avoids repeatedly reconstructing scan metadata can substantially reduce the time required for surgical planning.

Vision state-space models (SSMs)~\cite{104visionMamba2024vision,105VMamba2024vmamba} have received increasing attention due to their Transformer-like long-range modeling ability and near-linear computational efficiency. Medical SSM variants~\cite{162Umamba2024u,161swin-umamba2024swin,163VM-Unet2024vm} have also shown strong potential. Their main limitation is that most visual SSMs still rely on Cross-Scan style sequence ordering. This axis-aligned traversal is efficient, yet it disrupts diagonal and curved structures, weakening spatial continuity for non-axial anatomy~\cite{139zigzagMamba2024zigma,112snakeConv2023,39consistencyloss}. To address this issue, we introduce topology-aware scanning (TopoA-Scan). It traverses the feature grid along alternating diagonals, thus providing a more suitable implicit structural bias than purely axial ordering.

Recent SSM variants~\cite{170RSMamba2024rsmamba,171plainmamba2024plainmamba} have also started to study complementary feature fusion across different scans. However, direct fusion of multi-branch features usually relies on simple accumulation or heavy fusion heads, which may amplify redundancy. We therefore design a lightweight Hilbert-Schmidt Independence Criterion (HSIC) Gate~\cite{126HSIC2005Gretton_measuring,127HSIC2007Gretton_kernel,128HSIC2012Song_feature}. It exploits projected kernel dependence and adaptively balances Cross-Scan and TopoA-Scan features with only a single learnable scalar. At the system level, we further introduce ScanCache, a device-aware buffer caching mechanism. It converts repeated $O(HW)$ index construction into amortized lookup after the first traversal.

Based on these observations, we develop a topology-aware U-shaped state-space model (SSM), \textbf{TopoMamba}, for heterogeneous medical visual media. Our contributions are summarized as follows:

\begin{itemize}

\item We present \textbf{TopoMamba}, a practical combination of complementary scan ordering, lightweight dependence-aware fusion, and cache-aware inference for medical SSM segmentation.

\item We propose \textbf{ETMS}, an efficient topological multi-scan module built on \textbf{TopoA-Scan} and \textbf{ScanCache}, to introduce non-axial structural bias while preserving scan reuse. For $512\times512$ inputs, \textbf{ScanCache} reduces latency by 47.9\% (72.1→37.6 ms), while the topology-aware scan boosts pancreas Dice from 77.27\% to 79.72\% (+2.45 points).

\item We design a plug-and-play \textbf{HSIC Gate}. It uses only one learnable scalar, avoids heavy fusion heads, and enables stable dependence-aware fusion. We further instantiate a volumetric version, TopoMamba-3D, which preserves the same scan-and-fuse principle in 3D space.

\item TopoMamba is validated on heterogeneous medical visual media datasets, including CT, dermoscopic, and endoscopic images. The results show that it remains efficient under dynamic resolutions and outperforms CNN/Transformer/SSM baselines and remains competitive with SAM-based models on Synapse.

\end{itemize}

\section{Related Work}
\label{sec:related}

\textbf{Medical segmentation backbones.}

Convolutional neural networks (CNNs), such as U-Net~\cite{4unet,13unet++2018}, and vision transformers (ViTs)~\cite{34vit,12swin2021} have long served as the main backbones for medical segmentation. State-space models (SSMs)~\cite{105VMamba2024vmamba,104visionMamba2024vision} provide global contextual modeling with linear complexity. Recent medical SSM variants, including U-Mamba, Swin-UMamba, Neural Memory State Space Models, MPS-Mamba, and DCSS-UNet~\cite{162Umamba2024u,161swin-umamba2024swin,194NeuralMemorySSM2025,196MPSMamba2025,192DCSSUNet2024}, further confirm the value of adapting visual SSMs to anatomy- and modality-specific structure. Our work differs from these backbones by focusing on scan ordering, lightweight dependence-aware fusion, and reusable scan caching within a VMamba-compatible U-shaped design.

\textbf{Scan ordering for visual SSMs.}
Axial attention and criss-cross attention~\cite{113axialAttention2020,138Criss-CrossAttention2019ccnet} decompose 2D attention into 1D operations along the height and width axes. Yet they are limited in modeling oblique continuity. Recent medical SSM studies have explored richer ordering or topology-aware priors through zigzag traversal, rotation-aware scan organization, and wavelet-topology coupling~\cite{139zigzagMamba2024zigma,191RotateToScan2024,190TopoWMamba2025}. Inspired by JPEG zigzag ordering~\cite{120JPEG2000,121JPEG1992pennebaker}, we introduce TopoA-Scan, a locality-preserving diagonal traversal that provides a complementary structural basis to Cross-Scan in SSMs while preserving compatibility with VMamba-style design. In addition, operator-level optimization reduces I/O overhead but often overlooks auxiliary metadata, such as scan indices in SSMs~\cite{136AIICS2020cuda,135AIICS2018nvidiaCaching}. Our ScanCache algorithm promotes this metadata reuse to a first-order deployment design choice.

\textbf{Dependence-aware fusion.} Naive fusion by element-wise summation or direct concatenation lacks dependency regulation and may amplify redundant responses. Recent medical segmentation models have explored multi-scale state-space fusion, channel-adaptive skip fusion, and dual-branch SSM interaction~\cite{193SCFMUNet2025,195DualBranchSSM2025}. Kernel-based dependence measures, such as HSIC~\cite{126HSIC2005Gretton_measuring,127HSIC2007Gretton_kernel,128HSIC2012Song_feature}, provide a stable and non-parametric alternative for modulating branch interaction. Our HSIC Gate differs from heavier fusion heads by using only one learnable scalar to regulate the relative contribution of Cross-Scan and TopoA-Scan.

\textbf{Efficient inference under structured scans.}
Operator-level acceleration has been widely studied, yet auxiliary metadata such as scan indices is often treated as negligible overhead~\cite{136AIICS2020cuda,135AIICS2018nvidiaCaching}. For topology-aware scan styles, this assumption is less reliable, because non-affine reordering is explicit rather than implicit. ScanCache therefore turns index reuse into a first-order optimization. This is particularly important in deployment settings that repeatedly access a small set of resolutions.

\textbf{Topology-aware segmentation beyond scan design.}
We also distinguish our scan-based topology-aware design from explicit topology-preserving segmentation methods~\cite{197TopoSegNet2025}, which impose topology more directly through dedicated losses or topological primitives. In contrast, TopoMamba injects a structural bias through scan ordering and feature fusion. It is therefore topology-aware rather than topology-guaranteeing.

\section{Method}
\label{sec:method}

\subsection{Overview}

TopoMamba is a hierarchical U-Net-like architecture composed of visual state-space (VSS) blocks, as illustrated in Figure~\ref{fig2}. Our core contribution is a topology-aware selective scan module, denoted as \textbf{TopoMamba Block}, which combines an Efficient Topological Multi-Scan (ETMS) module with an HSIC Gate.

Each ETMS block runs two complementary branches: (i) a Cross-Scan branch that preserves compatibility with VMamba-style pretrained weights~\cite{105VMamba2024vmamba}, and (ii) a topology-aware TopoA-Scan branch that better retains oblique, curved, and ring-like structures. ETMS also integrates ScanCache, which registers scan indices as device-aware buffers and reuses them with an LRU policy~\cite{134AIICS2003LRU}.

The two branches are projected into a shared high-dimensional feature distribution space~\cite{150HSIC_projection_johnson1984extensions} and fused through the HSIC Gate. This design exposes differences between branch statistics while allowing the gate to adaptively regulate how much the fused representation relies on Cross-Scan versus TopoA-Scan.

\begin{algorithm}[htb]
\caption{Base Diagonal TopoA Order Generation}
\label{alg:topoa_base_order}
\begin{algorithmic}[1]
\Require Image dimensions $H$, $W$
\Ensure Base diagonal order $\operatorname{diag} \in \mathbb{Z}^{HW}$
\State $\operatorname{diag} \gets \emptyset$
\For{$k \gets 0$ \textbf{to} $H + W - 2$}
    \State $\mathcal{T}_k \gets \emptyset$
    \For{$i \gets 0$ \textbf{to} $H - 1$}
        \State $j \gets k - i$
        \If{$0 \le j < W$}
            \State $\mathcal{T}_k \gets \mathcal{T}_k \cup \{(i, j)\}$
        \EndIf
    \EndFor
    \If{$k \bmod 2 = 1$}
        \State $\mathcal{T}_k \gets \operatorname{reverse}(\mathcal{T}_k)$
    \EndIf
    \ForAll{$(i, j) \in \mathcal{T}_k$}
        \State $\operatorname{diag} \gets \operatorname{diag} \cup \{i \times W + j\}$
    \EndFor
\EndFor
\State \Return $\operatorname{diag}$
\end{algorithmic}
\end{algorithm}

\begin{algorithm}[thb]
\caption{ScanCache Algorithm}
\label{alg:scancache}
\begin{algorithmic}[1]
\Require Input dimensions $H, W$, target device $d$, cache $\mathcal{C}$, cache capacity $M$
\Ensure TopoA index pair $(\mathbf{I}_{\text{fwd}}, \mathbf{I}_{\text{inv}})$

\State $k \leftarrow (H, W, \text{device\_signature}(d))$

\If{$k \in \mathcal{C}$}
    \State $(\mathbf{I}_{\text{fwd}}, \mathbf{I}_{\text{inv}}) \leftarrow \mathcal{C}[k]$
    \If{$\text{device}(\mathbf{I}_{\text{fwd}}) \neq d$}
        \State $(\mathbf{I}_{\text{fwd}}, \mathbf{I}_{\text{inv}}) \leftarrow \text{transfer}\big((\mathbf{I}_{\text{fwd}}, \mathbf{I}_{\text{inv}}), d\big)$
        \State $\mathcal{C}[k] \leftarrow (\mathbf{I}_{\text{fwd}}, \mathbf{I}_{\text{inv}})$
    \EndIf
    \State $\text{update\_LRU}(\mathcal{C}, k)$
    \State \Return $(\mathbf{I}_{\text{fwd}}, \mathbf{I}_{\text{inv}})$
\EndIf

\State $\operatorname{diag} \leftarrow \textsc{BuildBaseDiagonal}(H, W)$
\State $\operatorname{antidiag} \leftarrow \textsc{BuildBaseAntiDiagonal}(H, W)$
\State $\mathbf{I}_{\text{fwd}} \leftarrow [\operatorname{diag}, \operatorname{antidiag}, \operatorname{flip}(\operatorname{diag}), \operatorname{flip}(\operatorname{antidiag})]$
\State Allocate $\mathbf{I}_{\text{inv}} \in \mathbb{Z}^{4 \times HW}$
\For{$m \gets 0$ \textbf{to} $3$}
    \For{$j \gets 0$ \textbf{to} $HW - 1$}
        \State $\mathbf{I}_{\text{inv}}[m, \mathbf{I}_{\text{fwd}}[m, j]] \leftarrow j$
    \EndFor
\EndFor

\State $\text{register\_nonpersistent\_buffer}(\mathcal{C}, k, (\mathbf{I}_{\text{fwd}}, \mathbf{I}_{\text{inv}}))$
\If{$|\mathcal{C}| > M$}
    \State $k_{\text{old}} \leftarrow \text{get\_LRU\_key}(\mathcal{C})$
    \State $\text{evict}(\mathcal{C}, k_{\text{old}})$
\EndIf

\State \Return $(\mathbf{I}_{\text{fwd}}, \mathbf{I}_{\text{inv}})$
\end{algorithmic}
\end{algorithm}

\subsection{The TopoMamba Block}

As shown in Figure~\ref{fig2}(b), the core innovation of TopoMamba lies in the TopoMamba block, which consists of Efficient Topological Multi-Scan (ETMS) Block with ScanCache and the HSIC Gate.

\subsubsection{ Efficient Topological Multi-Scan (ETMS)}
The ETMS module consists mainly of two scanning methods, namely Cross-Scan~\cite{105VMamba2024vmamba} and TopoA-Scan for feature extraction, together with ScanCache, a scan-index reuse algorithm suitable for SSMs.

Cross-Scan Branch: We adopt the Cross-Scan strategy from VMamba~\cite{105VMamba2024vmamba} to capture axis-aligned structure and load pretrained weights. A detailed SSM formulation is given in the supplementary material.
% Appendix~\ref{sec:appendixA.2}.

\textbf{Topology-aware Scan (TopoA-Scan) Branch:} Inspired by ZigzagMamba~\cite{139zigzagMamba2024zigma} and JPEG image compression algorithms~\cite{121JPEG1992pennebaker}, we implement a PyTorch version of the topology-aware scan algorithm for extracting image features in the SSM model. Similar to Cross-Scan, we generate four diagonal scanning patterns for a feature map of shape $(H, W)$. We traverse along diagonals where $i + j = s$ for diagonal index $s \in [0, H+W-2]$. For each diagonal, we alternate the traversal direction:

\begin{equation}
\text{diag}(s) = \begin{cases}
\mathcal{D}(s) & \text{if } s \text{ is even} \\
\text{reverse}(\mathcal{D}(s)) & \text{if } s \text{ is odd}
\end{cases}
\end{equation}
where $\mathcal{D}(s) = \{(i, s-i) \mid 0 \leq i < H, 0 \leq s-i < W\}$.

Similarly, we traverse anti-diagonals where $i - j = s - (W-1)$:

\begin{equation}
\text{antidiag}(s) = \begin{cases}
\mathcal{A}(s) & \text{if } s \text{ is even} \\
\text{reverse}(\mathcal{A}(s)) & \text{if } s \text{ is odd}
\end{cases}
\end{equation}
where $\mathcal{A}(s) = \{(i, W-1-(s-i)) \mid 0 \leq i < H, 0 \leq s-i < W\}$.

Therefore, we obtain the complete forward index set as follows:
\begin{equation}
\begin{aligned}
\mathbf{I}_{\text{fwd}} = [\text{diag}, \text{antidiag},
\text{flip}(\text{diag}), \text{flip}(\text{antidiag})]
\end{aligned}
\end{equation}
where $\mathbf{I}_{\text{fwd}} \in \mathbb{Z}^{4 \times L}$, $L = H \times W$, and each row contains indices in $[0, L-1]$ for flattened feature maps.

Finally, we compute inverse indices to restore the original spatial order after selective scan~\cite{105VMamba2024vmamba}:

\begin{equation}
\begin{aligned}
\mathbf{I}_{\text{inv}}[k, \mathbf{I}_{\text{fwd}}[k, j]] = j, \quad \forall k \in [0,3], j \in [0, L-1]
\end{aligned}
\end{equation}

This ensures that after gathering features with $\mathbf{I}_{\text{fwd}}$ and processing, we can scatter them back using $\mathbf{I}_{\text{inv}}$. Furthermore, we prove that TopoA-Scan provides Spatial Locality Preservation~\cite{122spatiallocality2012space} and complete coverage~\cite{124completecoverage2002jpeg} of the image grid in supplementary material.
% Appendix~\ref{sec:appendixB}.

\textbf{ScanCache:}
SSM scanning~\cite{104visionMamba2024vision,105VMamba2024vmamba} methods recompute index patterns $\mathbf{I}_{\text{fwd}}$ and $\mathbf{I}_{\text{inv}}$ at every forward pass, incurring significant overhead. To address this, we introduce \textbf{ScanCache}, a caching algorithm that registers scanning indices as non-persistent PyTorch buffers managed as part of the model's state. Unlike dictionary-based caching, which stores indices in CPU memory, buffers reside directly on GPU and migrate seamlessly when the model is moved via \texttt{.cuda()}, eliminating costly CPU-GPU transfers.

Given an input feature map $\mathbf{X} \in \mathbb{R}^{B \times C \times H \times W}$, the complete TopoA-Scan with ScanCache proceeds as follows. First, ScanCache verifies whether the pair $(\mathbf{I}_{\text{fwd}}, \mathbf{I}_{\text{inv}})$ already exists. If the pair is found, it is retrieved directly; otherwise, the new scan index pair is registered as a buffer. Meanwhile, a Least Recently Used (LRU) eviction policy~\cite{134AIICS2003LRU} automatically removes stale scan patterns to maintain memory efficiency when cache capacity is exceeded. With buffer-based caching on GPU, repeated index construction is converted into amortized lookup after the first request. To avoid mixing incompatible measurements, we separately report index-service overhead for constructing or retrieving $(\mathbf{I}_{\text{fwd}}, \mathbf{I}_{\text{inv}})$ and end-to-end network latency in Section~\ref{sec:scancache_efficiency}. More details of the ScanCache algorithm are provided in the supplementary material.

\begin{algorithm}[htb]
\caption{HSIC Gate}
\label{alg:hsic_gate}
\begin{algorithmic}[1]
\Require $\mathbf{F}_{\text{cross}}, \mathbf{F}_{\text{TopoA}} \in \mathbb{R}^{B \times C \times L}$; selected projection cap $d_{\text{proj}} = 64$; scale $\alpha = 0.5$; temperature $T = 1.5$; residual weight $\rho = 0.2$
\Ensure $\mathbf{F}_{\text{out}} \in \mathbb{R}^{B \times C \times L}$

\For{$b = 1$ to $B$}
  \State $k \leftarrow \max(8, \min(d_{\text{proj}}, L))$
  \State Retrieve or sample a normalized random projection $P \in \mathbb{R}^{L \times k}$
  \State $X_C \leftarrow \operatorname{norm}_{\ell_2}\!\big((\mathbf{F}_{\text{cross}}[b]P)/\sqrt{L}\big)$
  \State $X_T \leftarrow \operatorname{norm}_{\ell_2}\!\big((\mathbf{F}_{\text{TopoA}}[b]P)/\sqrt{L}\big)$
  \State Compute pairwise squared distances and the median bandwidth $\sigma_{\text{med}}^2$
  \State $K_C[i,j] \leftarrow \exp\!\big(-\|X_C[i,:]-X_C[j,:]\|^2/(2\sigma_{\text{med}}^2)\big)$
  \State $K_T[i,j] \leftarrow \exp\!\big(-\|X_T[i,:]-X_T[j,:]\|^2/(2\sigma_{\text{med}}^2)\big)$
  \State Center kernels by row mean, column mean, and global mean
  \State Denote the centered kernels by $K_C^c$ and $K_T^c$
  \State $\widehat{\mathrm{HSIC}} \leftarrow \frac{1}{(C-1)^2} \sum_{i,j} (K_C^c)_{ij}(K_T^c)_{ij}$
  \State $w \leftarrow \sigma(\alpha \widehat{\mathrm{HSIC}}/T)$
  \State $\mathbf{F}_{\text{fuse}} \leftarrow w\,\mathbf{F}_{\text{TopoA}}[b] + (1-w)\,\mathbf{F}_{\text{cross}}[b]$
  \State $\mathbf{F}_{\text{out}}[b] \leftarrow (1-\rho)\,\mathbf{F}_{\text{fuse}} + \rho\,\mathbf{F}_{\text{TopoA}}[b]$
\EndFor

\State \Return $\mathbf{F}_{\text{out}}$
\end{algorithmic}
\end{algorithm}

% \begin{algorithm}[H]
% \caption{HSIC Gate}
% \label{alg:hsic_gate}
% \begin{algorithmic}[1]
% \Require $\mathbf{F}_{\text{cross}}, \mathbf{F}_{\text{TopoA}} \in \mathbb{R}^{B \times C \times L}$; projection cap $d_{\text{proj}}$; scale $\alpha$; temperature $T{=}1.5$; residual $\rho{=}0.2$
% \Ensure $\mathbf{F}_{\text{out}} \in \mathbb{R}^{B \times C \times L}$
% \For{$b=1$ to $B$}
%   \State $k \gets \max(8,\min(d_{\text{proj}},L))$
%   \State Sample $P \in \mathbb{R}^{L \times k}$; column-normalize
%   \State $X_C \gets \operatorname{norm}_{\ell_2}\!\big((\mathbf{F}_{\text{cross}}[b]P)/\sqrt{L}\big)$
%   \State $X_T \gets \operatorname{norm}_{\ell_2}\!\big((\mathbf{F}_{\text{TopoA}}[b]P)/\sqrt{L}\big)$
%   \State Compute pairwise squared distances and median bandwidth $\sigma^2_{\text{med}}$
%   \State $K_C[i,j] \gets \exp\!\big(-\|X_C[i,:]-X_C[j,:]\|^2/(2\sigma^2_{\text{med}})\big)$
%   \State $K_T[i,j] \gets \exp\!\big(-\|X_T[i,:]-X_T[j,:]\|^2/(2\sigma^2_{\text{med}})\big)$
%   \State Center by means: $K^c \gets K - r - c + m$
%   \State $\widehat{\mathrm{HSIC}} \gets \frac{1}{(C-1)^2}\sum_{i,j}(K_C^{c})_{ij}(K_{T}^{c})_{ij}$
%   \State $w \gets \sigma(\alpha \widehat{\mathrm{HSIC}}/T)$
%   \State $\mathbf{F}_{\text{out}}[b] \gets (1-\rho)\,(w\,\mathbf{F}_{\text{TopoA}}[b]+(1-w)\,\mathbf{F}_{\text{cross}}[b])+\rho\,\mathbf{F}_{\text{TopoA}}[b]$
% \EndFor
% \State \Return $\mathbf{F}_{\text{out}}$
% \end{algorithmic}
% \end{algorithm}

\begin{figure}[ht]
\centerline{\includegraphics[width=0.97\columnwidth]{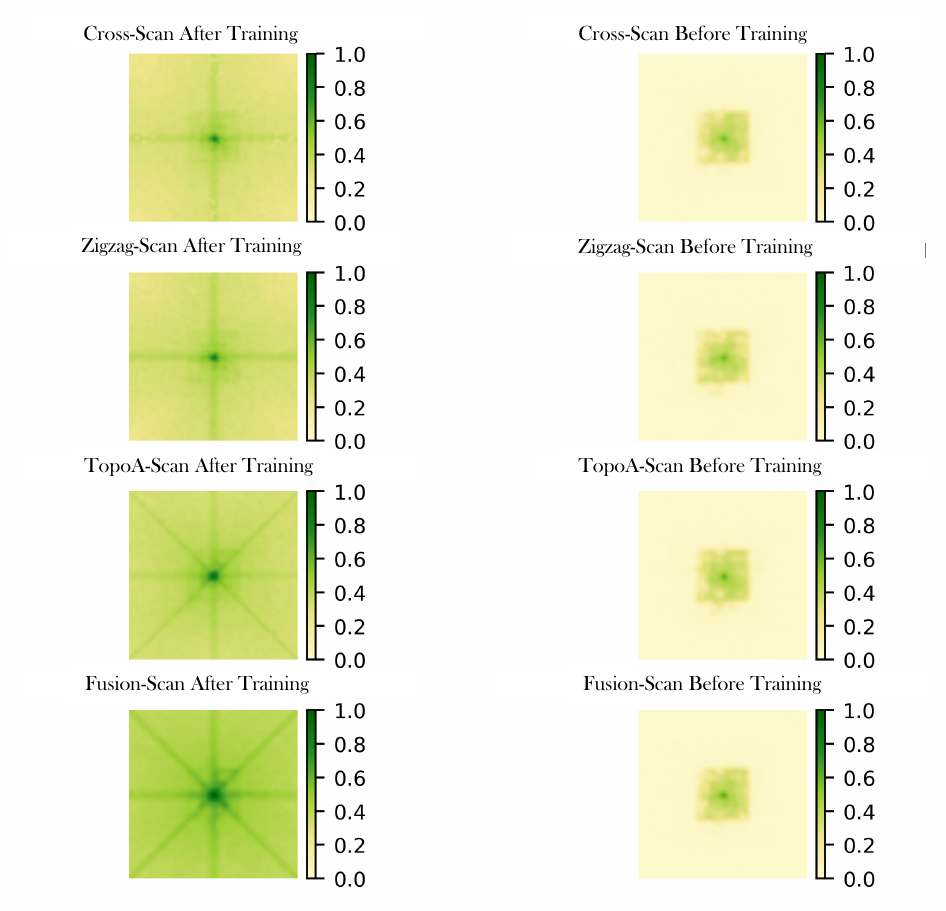}}
\Description{Effective receptive field comparison across scan variants, illustrating that the fused TopoMamba design covers broader and more structured regions than individual scan branches.}
\caption{Effective receptive fields (ERFs)~\cite{168ERF2016understanding} before and after training, averaged over 300 slices. ERFs are computed from unit input perturbations and normalized output-gradient energy. Fusion-Scan denotes Cross-Scan fused with TopoA-Scan via the HSIC Gate.}
\label{ERF} 
\end{figure}

% ----------HSIC Gate A
\subsubsection{Hilbert-Schmidt Independence Criterion (HSIC) Gate}

After obtaining features from both branches, we need an adaptive fusion mechanism to combine them effectively. Simply averaging or concatenating dual-branch features is suboptimal because different image regions may benefit from different scanning patterns. For instance, Cross-Scan is effective for axis-aligned boundaries and elongated anatomical extents, whereas TopoA-Scan is better suited to oblique, curved, and ring-like patterns frequently observed in lesions, polyps, and thin organs. Therefore, we introduce the \textbf{HSIC Gate}, which efficiently measures statistical dependence between feature maps $\mathbf{F}_{\text{cross}}, \mathbf{F}_{\text{TopoA}} \in \mathbb{R}^{B \times C \times L}$ via the Hilbert-Schmidt Independence Criterion~\cite{126HSIC2005Gretton_measuring,127HSIC2007Gretton_kernel}. For brevity, both TopoA-Scan and Cross-Scan expressions are unified using the `scan' notation.

We first apply Johnson-Lindenstrauss random projection~\cite{150HSIC_projection_johnson1984extensions,177JL_Proj2003database} to reduce computational overhead and stabilize kernel computation:
\begin{equation}
\mathbf{X}_{\text{scan}} = \frac{\mathbf{F}_{\text{scan}} \mathbf{P}}{\sqrt{L}} \in \mathbb{R}^{B \times C \times k}
\end{equation}
where $\mathbf{P}$ is a normalized random projection matrix and $\mathbf{X}_{\text{scan}}$ denotes projected features preserving pairwise distances with high probability.

We then compute RBF kernels~\cite{178RBF_scholkopf2002learning,179RBF_shawe2004kernel} to capture nonlinear dependencies:
\begin{equation}
    \mathbf{K}_{\text{scan}}[i,j]=\exp\!\left(-\frac{\|\mathbf{X}_{\text{scan}}[i,:]-\mathbf{X}_{\text{scan}}[j,:]\|_2^2}{2\sigma^2}\right).
\end{equation}
where $\sigma$ is the kernel bandwidth~\cite{180RBF_camps2009kernel}.
% determined by the median heuristic per batch.

Next, we center the kernel matrices to remove mean effects~\cite{126HSIC2005Gretton_measuring}:
\begin{equation}
\tilde{\mathbf{K}}_{\text{scan}} = \mathbf{H}\mathbf{K}_{\text{scan}}\mathbf{H}
\end{equation}
where $\mathbf{H} = \mathbf{I}_C - \frac{1}{C}\mathbf{1}_C\mathbf{1}_C^T$ is the channel-wise centering matrix~\cite{128HSIC2012Song_feature}. 
% Concretely, HSIC is computed per sample (per b in the batch) on channel–channel Gram matrices of size $C×C$.

The HSIC value is computed as the normalized Frobenius inner product~\cite{126HSIC2005Gretton_measuring}:
\begin{equation}
\text{HSIC}(\mathbf{F}_{\text{cross}}, \mathbf{F}_{\text{TopoA}}) = \frac{\langle \tilde{\mathbf{K}}_{\text{cross}}, \tilde{\mathbf{K}}_{\text{TopoA}} \rangle_F}{(C-1)^2}
\end{equation}
where $\langle \mathbf{A}, \mathbf{B} \rangle_F = \sum_{i,j} A_{ij}B_{ij}$ denotes the Frobenius inner product. Higher HSIC values indicate stronger statistical dependence between the two branches.

Subsequently, we convert HSIC into an adaptive gate weight through a learnable sigmoid function $w=\sigma(\alpha \cdot \text{HSIC}/T)$, where $\alpha$ is a trainable parameter initialized to $0.5$ and $T$ is a temperature factor that smooths the gating sharpness. Under this definition, larger HSIC increases the TopoA-Scan contribution inside the convex fusion term, whereas smaller HSIC relatively increases the Cross-Scan contribution. The final fusion is performed as $\mathbf{F}_{\text{out}}=(1-\rho)\,\big(w \cdot \mathbf{F}_{\text{TopoA}} + (1-w) \cdot \mathbf{F}_{\text{cross}}\big)+\rho\,\mathbf{F}_{\text{TopoA}}$, where $\rho$ provides a residual shortcut for stability. The proposed gate is therefore intentionally TopoA-biased rather than perfectly symmetric: HSIC modulates how strongly the model trusts TopoA-Scan relative to Cross-Scan, while the residual path preserves a minimum TopoA-Scan contribution even when the measured dependence is low.

% \subsection{Volumetric TopoMamba-3D}
% To address the common concern that slice-wise evidence is insufficient for clinical CT segmentation, we also instantiate a volumetric TopoMamba-3D model. Following the implementation in \texttt{models/TopoMamba\_3D.py}, the 3D variant preserves the same encoder-decoder backbone while replacing 2D branches with an axial 3D branch and a topology-aware diagonal 3D branch over $(D,H,W)$. ScanCache is extended to cache 3D scan orders per device and spatial shape, and the HSIC Gate is computed over multiple 3D views, namely the full volume and depth-, height-, and width-aggregated projections. This keeps the main scan-and-fuse principle unchanged while making the strongest Synapse result genuinely volumetric Topology.

\begin{table*}[t]
  \caption{Results on Synapse multi-organ CT. Metrics are Dice similarity coefficient (\%) and HD95 (mm).}
  \label{tab:synapse_results_acm}
  % \scriptsize
  \small
  \setlength{\tabcolsep}{2pt}
  \begin{tabular}{llcccccccccccc}
    \toprule
    Backbone & Method & Param(M) & Spleen & R.Kid & L.Kid & Gall. & Liver & Stom. & Aorta & Panc. & Avg & HD95(mm) \\
    \midrule
    Conv & nnUNet~\cite{157nnUnet2021nature}(Nature’21) & 16.0 & 94.12 & 89.41 & 91.54 & 70.12 & 94.77 & 82.24 & 87.79 & 68.12 & 84.76 & 14.81 \\
    Diffusion & MedSegDiff~\cite{159medsegdiff2024medsegdiff}(PMLR’24) & 34.8 & 94.21 & 92.13 & 93.39 & 89.75 & 94.56 & 86.56 & 89.34 & 78.64 & 89.82 & 4.87 \\
    Transformer & Swin-UNETR~\cite{49swinUNETR}(CVPR’22) & 
    138.0 & 93.34 & 89.93 & 91.88 & 79.49 & 95.72 & 87.10 & 89.38 & 75.11 & 87.74 & 8.18 \\  
    Transformer & SAM (GT Box)~\cite{87sam_meta2023segment}(ICCV’23) & 636.0 & 94.55 & 89.76 & 90.95 & 82.97 & 95.74 & 90.14 & 90.15 & 76.33 & 88.82 & 3.27 \\
    Transformer & MedSAM (GT Box)~\cite{89medsam_nature2024segment}(Nature’24) & 636.0 
    & 95.88 & 90.71 & 91.40 & 90.82 & 92.44 & 90.70 & 92.47 & 76.87 & 90.16 & 2.41 \\
    Transformer & H-SAM (GT Box)~\cite{169H-SAM2024unleashing}(CVPR’24) & 112.3 & 97.10 & 93.60 & 94.30 & 79.40 & 97.50 & 92.10 & 89.20 & 79.40 & 90.32 & 4.81 \\
    SSM & VM-UNet~\cite{163VM-Unet2024vm}(ACM TOMM’24) & 31.0 & 89.31 & 79.81 & 84.78 & 63.28 & 95.02 & 80.40 & 87.68 & 59.66 & 79.99 & 21.20 \\
    SSM & Swin-UMamba~\cite{161swin-umamba2024swin}(MICCAI’24) & 27.6 & 90.16 & 80.75 & 86.85 & 64.28 & 95.00 & 77.17 & 88.13 & 57.70 & 80.01 & 16.42 \\
    \rowcolor{gray!10} SSM & TopoMamba-2D & 23.9 & 95.85 & 90.58 & 92.36 & 78.96 & 96.63 & 92.61 & 92.59 & 81.41 & 90.12 & 4.63 \\
    \rowcolor{gray!10} SSM & TopoMamba-3D & 27.2 & 96.93 & 91.42 & 93.11 & 82.25 & 96.71 & 93.56 & 94.13 & 82.11 & 91.28 & 3.08 \\
    \bottomrule
  \end{tabular}
\end{table*}

\begin{table}[t]
  \caption{Results on the ISIC 2017 dermoscopy and CVC-ClinicDB endoscopy benchmarks.}
  \label{tab:comparison_results_acm}
  \centering
  \footnotesize
  % \small
  \setlength{\tabcolsep}{5pt}
  \resizebox{\linewidth}{!}{
  \begin{tabular}{lccccccc}
    \toprule
    \multirow{2}{*}{Method} & \multirow{2}{*}{Param} & \multicolumn{3}{c}{ISIC 2017} & \multicolumn{3}{c}{CVC-ClinicDB} \\
    \cmidrule(lr){3-5} \cmidrule(lr){6-8}
    & & DSC & mIoU & Sp & DSC & mIoU & Sp \\
    \midrule
    UNet~\cite{4unet} & 31.0 & 84.41 & 73.91 & 95.18 & 86.12 & 78.02 & 95.46 \\
    SwinUNet~\cite{21swinUnet2022} & 27.2 & 85.83 & 78.53 & 95.72 & 86.67 & 79.21 & 95.67 \\
    Mamba-UNet~\cite{160MambaUnet2024mamba} & 19.1 & 88.57 & 78.02 & 96.01 & 89.40 & 79.46 & 96.24 \\
    U-Mamba~\cite{162Umamba2024u} & 23.42 & 88.65 & 79.03 & 95.89 & 88.74 & 79.08 & 96.30 \\
    Swin-UMamba~\cite{161swin-umamba2024swin} & 27.6 & 89.60 & 80.68 & 96.77 & 90.50 & 80.40 & 97.33 \\
    VM-UNet~\cite{163VM-Unet2024vm} & 31.0 & 89.03 & 80.23 & 97.58 & 89.84 & 79.62 & 96.36 \\
    \rowcolor{gray!20} TopoMamba-2D & 23.9 & \textbf{90.07} & \textbf{81.13} & \textbf{98.72} & \textbf{91.05} & \textbf{82.37} & \textbf{98.91} \\
    \bottomrule
  \end{tabular}
  }
\end{table}

\section{Experiments}
\label{sec:experiments}

Our experiments are designed to answer three questions central to topology-aware state-space modeling for multimedia segmentation: (1) whether the proposed scan strategy benefits structures that are not well captured by purely horizontal or vertical traversals; (2) whether the same scan-and-fusion design generalizes across substantially different visual media, from CT to dermoscopy and endoscopy; and (3) whether the design remains computationally practical when explicit scan-index construction is taken into account.

\subsection{Experimental Protocol}

We evaluate TopoMamba on three segmentation benchmarks: Synapse multi-organ CT, ISIC 2017 lesion dermoscopy, and CVC-ClinicDB polyp endoscopy~\cite{45synapse,164ISIC2017skin2018skin,165CVC-ClinicDB}. These datasets present complementary challenges. Synapse contains complex anatomy and weak organ boundaries; ISIC includes irregular lesion contours, hair occlusion, and illumination variation; and CVC features small targets, motion blur, specular highlights, and strong background texture. Consistent improvements across these settings would indicate that the proposed design is not tied to a single visual modality.

For Synapse, we adopt the widely used 18/12 train-test split and report Dice and HD95. Dice measures region overlap, whereas HD95 reflects large boundary deviations, although neither directly characterizes topology preservation. For ISIC and CVC, we report DSC, mIoU, and specificity, since suppressing false positives is particularly important in both datasets. Unless otherwise stated, all TopoMamba models are trained in PyTorch using AdamW, a learning rate of $3\times10^{-4}$, cosine decay, three random seeds, and an input size of $512^2$. Synapse serves as the primary benchmark for evaluating structural continuity, while ISIC and CVC test cross-media generalization; ScanCache is included to assess deployment efficiency when scan indices must be constructed explicitly.

% \begin{figure}[ht]
% \centerline{\includegraphics[width=0.97\columnwidth]{figs/figures/attention_pattern_plot.png}}
% \vspace{-0.7em}
% \caption{Deprecated draft caption kept only for archival reference.}
% \label{fig:attention_heatmap}
% \end{figure}

\begin{figure}[t]
\centering
\includegraphics[width=\columnwidth]{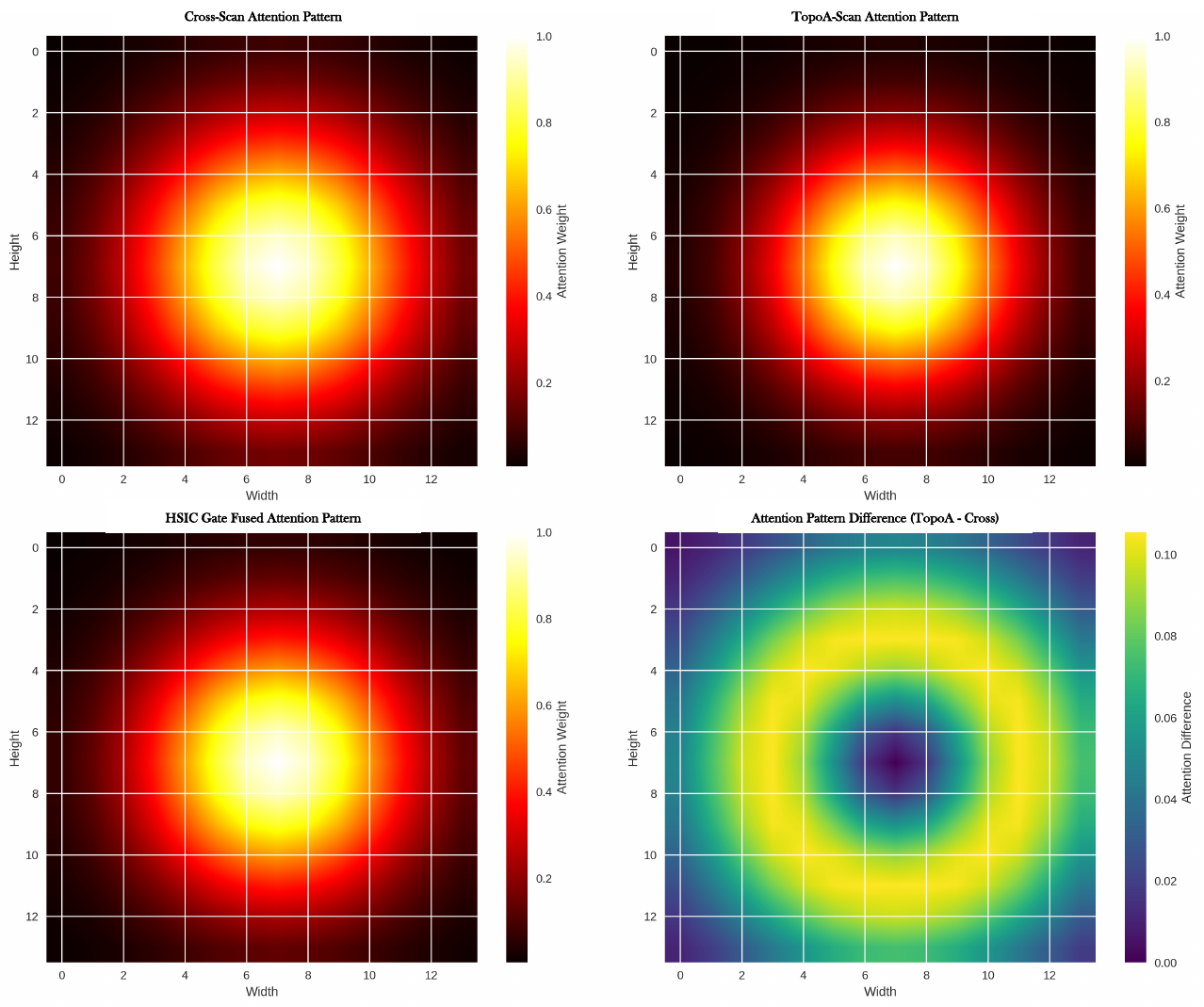}
\caption{Qualitative ETMS attention comparison. Both branches share similar global response centers but differ in directional emphasis: Cross-Scan is more axis-aligned, whereas TopoA-Scan responds more strongly to oblique and curved relations. This figure is qualitative only; quantitative evidence is provided by the controlled ablations and topology-oriented analysis.}
\label{fig:attention_heatmap}
\end{figure}

\begin{figure*}[ht]
\centerline{\includegraphics[width=0.85\textwidth]{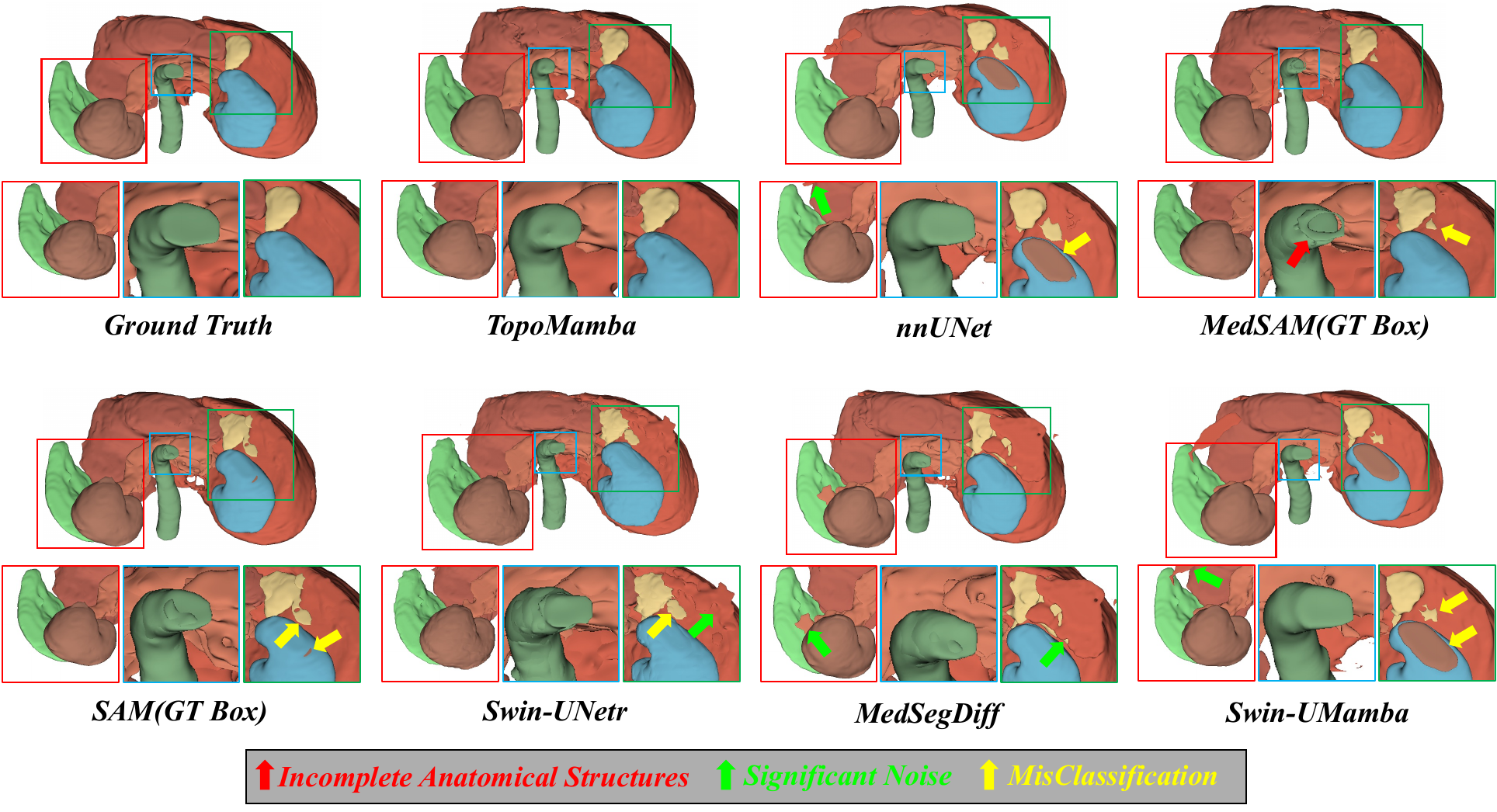}}
\Description{Qualitative Synapse comparisons highlighting failure modes such as incomplete structures, noise, and misclassification across multiple segmentation baselines and TopoMamba.}
\caption{Representative Synapse test cases.}
\label{fig:synapse_vis}
\end{figure*}

% \begin{figure*}[ht]
% \centering
% \begingroup
% \setlength{\unitlength}{0.0098\textwidth}
% \begin{picture}(100,54.25)
% \put(0,0){\includegraphics[width=0.98\textwidth]{figs/Synapse_Vis.pdf}}
% \put(37.8,54.6){\makebox(0,0)[c]{\colorbox{white}{\rule{0pt}{8.0mm}\hspace{34mm}}}}
% \put(37.8,54.8){\makebox(0,0)[c]{\colorbox{white}{\scriptsize\itshape TopoMamba-T}}}
% \end{picture}
% \endgroup
% \Description{Qualitative Synapse comparisons highlighting failure modes such as incomplete structures, noise, and misclassification across multiple segmentation baselines and TopoMamba.}
% \caption{Segmentation results on Synapse Multi-organ dataset.}
% \label{fig:synapse_vis}
% \end{figure*}

\subsection{Synapse Main Benchmark}

Table~\ref{tab:synapse_results_acm} reports the main results on Synapse, where the limitations of purely axis-aligned scanning are especially evident. Organs such as the pancreas and gallbladder are small, thin, curved, and often weakly contrasted, making them sensitive to how long-range spatial dependencies are serialized. TopoMamba-2D achieves 90.12\% average Dice with 23.9M parameters, substantially improving over VM-UNet, which attains 79.99\% with 31.0M parameters. TopoMamba-3D further reaches 91.28\%, indicating that the same design principle also remains effective in the 3D setting. Since Table~\ref{tab:synapse_results_acm} compares end-to-end systems, these results validate the overall architecture rather than isolating scan order alone.

The gains are most pronounced on the most challenging organs. On the pancreas, TopoMamba-2D improves Dice from 59.66\% to 81.41\%, and TopoMamba-3D further reaches 82.11\%. On the gallbladder, the corresponding scores are 78.96\% and 82.25\%. By contrast, gains on larger and anatomically simpler organs such as the liver and spleen are smaller, which is consistent with the intended role of the method in preserving non-axial structural continuity. Prompt-based models remain competitive on boundary quality: MedSAM achieves the best HD95 at 2.41 mm, compared with 4.63 mm for TopoMamba-2D and 3.08 mm for TopoMamba-3D. Nevertheless, TopoMamba delivers substantially stronger overlap than conventional SSM baselines without requiring external prompts. It also surpasses Swin-UNETR, which reaches 87.74\% Dice with 138.0M parameters, while using far fewer parameters.

\begin{table}[t]
  \caption{Topology-oriented evaluation on Synapse. Lower CCE/HCE and higher ETM indicate better topology preservation.}
  \label{tab:topology_results_acm}
  \centering
  \small
  \setlength{\tabcolsep}{6pt}
  \begin{tabular}{lccc}
    \toprule
    Method & CCE$\downarrow$ & HCE$\downarrow$ & ETM(\%)$\uparrow$ \\
    \midrule
    VM-UNet~\cite{163VM-Unet2024vm} & 4.86 & 0.75 & 32.2 \\
    Swin-UMamba~\cite{161swin-umamba2024swin} & 4.79 & 0.68 & 34.6 \\
    H-SAM (GT Box)~\cite{169H-SAM2024unleashing} & 3.41 & 0.54 & 37.6 \\
    MedSegDiff~\cite{159medsegdiff2024medsegdiff} & 3.29 & 0.52 & 38.1 \\
    \rowcolor{gray!15} TopoMamba-2D & 2.89 & 0.48 & 41.8 \\
    \rowcolor{gray!15} TopoMamba-3D & 2.18 & 0.44 & 49.7 \\
    \rowcolor{gray!15} +Topology-Aware Loss~\cite{198topoLoss2023cvpr} & 1.54 & 0.38 & 58.9 \\
    % \rowcolor{gray!15} +nnUNet-style pre/post-processing~\cite{157nnUnet2021nature} & \textbf{0.93} & \textbf{0.29} & \textbf{71.9} \\
    \bottomrule
  \end{tabular}
\end{table}

% \begin{table}[t]
%   \caption{Topology-oriented analysis on Synapse. Lower CCE/HCE is better; higher ETM is better.}
%   \label{tab:topology_results_acm}
%   \centering
%   \footnotesize
%   \setlength{\tabcolsep}{6pt}
%   \begin{tabular}{lccc}
%     \toprule
%     Method & CCE$\downarrow$ & HCE$\downarrow$ & ETM(\%)$\uparrow$ \\
%     \midrule
%     VM-UNet~\cite{163VM-Unet2024vm} & 4.86 & 0.75 & 32.2 \\
%     Swin-UMamba~\cite{161swin-umamba2024swin} & 4.79 & 0.68 & 34.6 \\
%     H-SAM (GT Box)~\cite{169H-SAM2024unleashing} & 3.41 & 0.54 & 57.6 \\   MedSegDiff~\cite{159medsegdiff2024medsegdiff}& 2.79 & 0.48 & 60.1 \\
%     TopoMamba-2D & 3.02 & 0.41 & 65.8 \\
%     TopoMamba-3D & 2.58 & 0.35 & 68.7 \\
%     \rowcolor{gray!15} TopoMamba-3D +Topology-Aware Loss~\cite{198topoLoss2023cvpr} & \textbf{1.47} & \textbf{0.21} & \textbf{73.9} \\
%     \bottomrule
%   \end{tabular}
% \end{table}

\subsection{Topology-oriented Analysis}

Because Dice and HD95 do not explicitly quantify topology, we further report three mask-level topology metrics on Synapse: component count error (CCE), hole count error (HCE), and exact topology match (ETM). Specifically, $\mathrm{ETM}=1$ only when both counts exactly match the ground truth.

As shown in Table~\ref{tab:topology_results_acm}, TopoMamba consistently improves topology-oriented quality over strong SSM baselines. TopoMamba-2D reduces CCE/HCE to 2.89/0.48 and raises ETM to 41.8\%, while TopoMamba-3D further improves these values to 2.18/0.44 and 49.7\%, respectively. When combined with the topology-aware loss, the scores are further strengthened to 1.54/0.38 and 58.9\%. These results are consistent with the improvements observed in Dice and HD95, especially on thin and curved organs. TopoMamba is topology-aware rather than topology-guaranteeing, in that it introduces a structural bias toward better continuity without imposing hard topological constraints.

\subsection{Cross-Dataset Transfer}

Table~\ref{tab:comparison_results_acm} evaluates whether the proposed design remains effective under substantial modality shifts. The answer is affirmative. On ISIC 2017, TopoMamba-2D improves DSC from 89.03\% to 90.07\%, mIoU from 80.23\% to 81.13\%, and specificity from 97.58\% to 98.72\% relative to VM-UNet. On CVC-ClinicDB, the improvements are from 89.84\% to 91.05\% in DSC, from 79.62\% to 82.37\% in mIoU, and from 96.36\% to 98.91\% in specificity. Although the absolute gains are smaller than those on Synapse, they are consistent across all reported metrics.

The specificity improvements are particularly important in these two settings. In dermoscopy, hair occlusion and illumination variation often induce false positives; in endoscopy, specular highlights and surrounding textures may produce similar errors. The results therefore suggest that TopoMamba improves foreground-background discrimination rather than merely enlarging the predicted masks. The gain is more pronounced on CVC-ClinicDB than on ISIC 2017, which is also consistent with the fact that polyps frequently exhibit curved boundaries and stronger local distractors. Overall, the same scan-and-fusion design transfers well across CT, dermoscopy, and endoscopy.

\begin{table}[t]
  \caption{Controlled Synapse ablation under the protocol of Table~\ref{tab:synapse_results_acm}; only scan ordering and fusion vary.}
  \label{tab:scan_components_acm}
  \centering
  \normalsize
  \setlength{\tabcolsep}{5pt}
  \resizebox{\columnwidth}{!}{%
  \begin{tabular}{lccccccccc}
    \toprule
    Configuration & Spl. & R.Kid & L.Kid & Gall. & Liv. & Sto. & Aor. & Pan. & Avg \\
    \midrule
    Cross-Scan only & 94.63 & 89.18 & 90.81 & 76.94 & 96.15 & 91.24 & 91.12 & 77.27 & 88.42 \\
    TopoA-Scan only & 94.14 & 89.10 & 90.33 & 79.08 & 96.19 & 91.47 & 91.86 & 79.72 & 88.98 \\
    Cross-Scan+TopoA-Scan+Add & 88.48 & 85.01 & 86.66 & 62.47 & 86.31 & 81.95 & 82.08 & 69.28 & 80.28 \\
    \rowcolor{gray!15} Cross-Scan+TopoA-Scan+HSIC Gate & \textbf{95.85} & \textbf{90.58} & \textbf{92.36} & \textbf{78.96} & \textbf{96.63} & \textbf{92.61} & \textbf{92.59} & \textbf{81.41} & \textbf{90.12} \\
    \bottomrule
  \end{tabular}}
\end{table}

\subsection{Scan Strategy Ablation}

Table~\ref{tab:scan_components_acm} provides a controlled ablation under the same protocol as Table~\ref{tab:synapse_results_acm}, where only scan ordering and fusion are varied. Among the single-branch settings, TopoA-Scan only achieves 88.98\% average Dice, outperforming Cross-Scan only at 88.42\%. The advantage is most visible on the gallbladder and pancreas, suggesting that TopoA-Scan is better suited to preserving slanted and curved anatomical continuity. At the same time, Cross-Scan remains slightly stronger on several organs, indicating that no single traversal order is uniformly optimal.

The dual-branch results further show that complementarity must be paired with an appropriate fusion mechanism. Simply combining Cross-Scan and TopoA-Scan with element-wise addition yields only 80.28\% average Dice, which is markedly worse than either single-branch setting. By contrast, replacing naive addition with HSIC Gate raises performance to 90.12\%, the best result in the table. This controlled study therefore supports the central claim of the paper: the gain arises from combining complementary scan orders with dependence-aware fusion, rather than from increasing branch count alone.

\begin{table}[t]
  \caption{Fusion comparison within the same Cross-Scan+TopoA-Scan backbone under the protocol of Table~\ref{tab:scan_components_acm}. BFTT3D-, I2P-MAE-, and DTRG-style baselines are reimplemented as lightweight interaction rules rather than task-specific medical segmentation heads.}
  \label{tab:fusion_results_acm}
  \centering
  \normalsize
  \setlength{\tabcolsep}{3pt}
  \resizebox{\columnwidth}{!}{%
  \begin{tabular}{lccccccccc}
    \toprule
    Fusion & Spl. & R.Kid & L.Kid & Gall. & Liv. & Sto. & Aor. & Pan. & Avg \\
    \midrule
    Element-wise Add & 88.48 & 85.01 & 86.66 & 62.47 & 86.31 & 81.95 & 82.08 & 69.28 & 80.28\\
    Concat + Conv & 91.71 & 87.07 & 88.13 & 75.14 & 93.33 & 88.97 & 89.01 & 76.90 & 86.28 \\
    SE channel gate~\cite{33SEnet} & 92.56 & 87.09 & 88.78 & 75.91 & 93.38 & 89.07 & 89.15 & 77.01 & 86.62 \\
    BFTT3D-style fusion~\cite{184BFTT3D_CVPR2024backpropagation} & 95.51 & 90.04 & 91.71 & 78.73 & 96.34 & 91.99 & 92.11 & 79.92 & 89.54 \\
    I2P-MAE-style fusion~\cite{183I2P-MAE_CVPR2023learning} & 95.60 & 90.16 & 91.84 & 78.12 & 96.41 & 92.15 & 92.20 & 79.56 & 89.50 \\
    DTRG-style fusion~\cite{185DTRG_TIP2022convolutional} & 95.47 & 89.98 & 91.62 & 78.58 & 96.37 & 91.90 & 92.07 & 79.69 & 89.46 \\
    \rowcolor{gray!15} HSIC Gate & \textbf{95.85} & \textbf{90.58} & \textbf{92.36} & \textbf{78.96} & \textbf{96.63} & \textbf{92.61} & \textbf{92.59} & \textbf{81.41} & \textbf{90.12} \\
    \bottomrule
  \end{tabular}}
\end{table}

\subsection{Fusion Mechanism Analysis}

Table~\ref{tab:fusion_results_acm} compares several fusion modules under the same Cross-Scan+TopoA-Scan backbone. The BFTT3D-style~\cite{184BFTT3D_CVPR2024backpropagation}, I2P-MAE-style~\cite{183I2P-MAE_CVPR2023learning}, and DTRG-style~\cite{185DTRG_TIP2022convolutional} baselines are included not for their original medical-segmentation settings, but because they represent three relevant interaction principles: adaptive two-stream fusion, auxiliary-feature-guided interaction, and relation-structured coupling. For fairness, all are implemented as lightweight interaction rules within the same dual-branch backbone, with the scan branches, encoder-decoder, optimizer, and training protocol fixed. Specifically, BFTT3D-style uses adaptive two-branch weighting, I2P-MAE-style uses one branch to guide the other before aggregation, and DTRG-style inserts a compact relation-aware coupling block before fusion. Thus, only the branch interaction rule differs, while all other components remain unchanged.

Element-wise addition gives the weakest result, with an average Dice of 80.28\%. The three style-adapted baselines improve this to 89.54\%, 89.50\%, and 89.46\%, respectively, while HSIC Gate performs best at 90.12\%. The organ-wise results show the same trend. On the gallbladder, Dice increases from 62.47\% to 78.96\%; on the pancreas, from 69.28\% to 81.41\%. HSIC Gate also achieves the best result on every reported organ in Table~\ref{tab:fusion_results_acm}. These results support Section~\ref{sec:method}: the effectiveness of the dual-branch design depends not only on branch diversity, but also on a compact fusion rule that adapts to the statistical dependence between branches.

\begin{table}[t]
  \caption{HSIC Gate sensitivity on Synapse under 30-epoch scratch training; hyperparameters are selected on the validation split only.}
  \label{tab:hsic_sensitivity_acm}
  \centering
  \footnotesize
  % \scriptsize
  \setlength{\tabcolsep}{5pt}
  \begin{tabular}{cccccc}
    \toprule
    \multicolumn{6}{c}{$T$ sensitivity with $k=64$} \\
    \midrule
    $T$ & 0.5 & 1.0 & 1.5 & 2.0 & 2.5 \\
    DSC & 68.14 & 69.57 & \textbf{71.44} & 68.54 & 67.64 \\
    \midrule
    \multicolumn{6}{c}{$k$ sensitivity with $T=1.5$} \\
    \midrule
    $k$ & 16 & 32 & 64 & 128 & -- \\
    DSC & 64.42 & 67.51 & \textbf{71.44} & 69.04 & -- \\
    \bottomrule
  \end{tabular}
\end{table}

\begin{table}[t]
  \caption{Measured ScanCache efficiency under fixed and dynamic input resolutions.}
  \label{tab:scancache_acm_measured}
  \centering
  % \footnotesize
  \scriptsize
  \setlength{\tabcolsep}{2pt}
  \begin{tabular}{llccc}
    \toprule
    Setting & Input / Mix & Latency & Gain / Hit & FPS \\
    \midrule
    Dual-branch & $256^2$ & $47.3 \rightarrow 31.9$ ms & 32.6\% / 99.07\% hit & 31.3 \\
    Dual-branch & $512^2$ & $72.1 \rightarrow 37.6$ ms & 47.9\% / 99.07\% hit & 26.6 \\
    TopoA-Scan only & $512^2$ & $46.1 \rightarrow 9.7$ ms & 79.0\% / 99.07\% hit & 103.2 \\
    \midrule
    Fixed & 100\% $512^2$ & $72.7 \rightarrow 37.9$ ms & 47.8\% / 99.07\% hit & 26.4 \\
    Two-scale & 50\% $256^2$ + 50\% $512^2$ & $60.4 \rightarrow 35.2$ ms & 41.7\% / 98.15\% hit & 28.4 \\
    Multi-scale & 5 recurring sizes & $59.5 \rightarrow 34.7$ ms & 41.7\% / 95.37\% hit & 28.8 \\
    Unique-scale/sample & all unique & $55.4 \rightarrow 37.5$ ms & 32.4\% / 86.48\% hit & 26.7 \\
    \bottomrule
  \end{tabular}
\end{table}

% \begin{table}[t]
%   % The top block reports end-to-end latency under fixed resolutions. The bottom block reports dynamic-resolution latency and cache hit rate over all ETMS cache queries in a full forward pass.
%   \caption{Compact ScanCache summary. }
%   \label{tab:scancache_acm}
%   \centering
%   \footnotesize
%   % \scriptsize
%   \setlength{\tabcolsep}{5pt}
%   \begin{tabular}{llccc}
%     \toprule
%     Setting & Input / Mix & Latency & Gain / Hit & FPS \\
%     \midrule
%     Dual-branch & $256^2$ & $26.2 \rightarrow 15.1$ ms & 42.4\% & 66.2 \\
%     Dual-branch & $512^2$ & $53.4 \rightarrow 16.2$ ms & 69.7\% & 61.7 \\
%     TopoA-Scan only & $512^2$ & $46.7 \rightarrow 10.0$ ms & 78.6\% & 100.0 \\
%     \midrule
%     Fixed & 100\% $512^2$ & 23.4 ms & 99.67\% hit & -- \\
%     Two-scale & 50\% $256^2$ + 50\% $512^2$ & 19.5 ms & 99.67\% hit & -- \\
%     Multi-scale & 5 recurring sizes & 30.3 ms & 99.00\% hit & -- \\
%     Unique-scale/sample & all unique & 38.6 ms & 85.33\% hit & -- \\
%     \bottomrule
%   \end{tabular}
% \end{table}

\begin{figure*}[ht]
\centerline
{\includegraphics[width=0.85\textwidth]{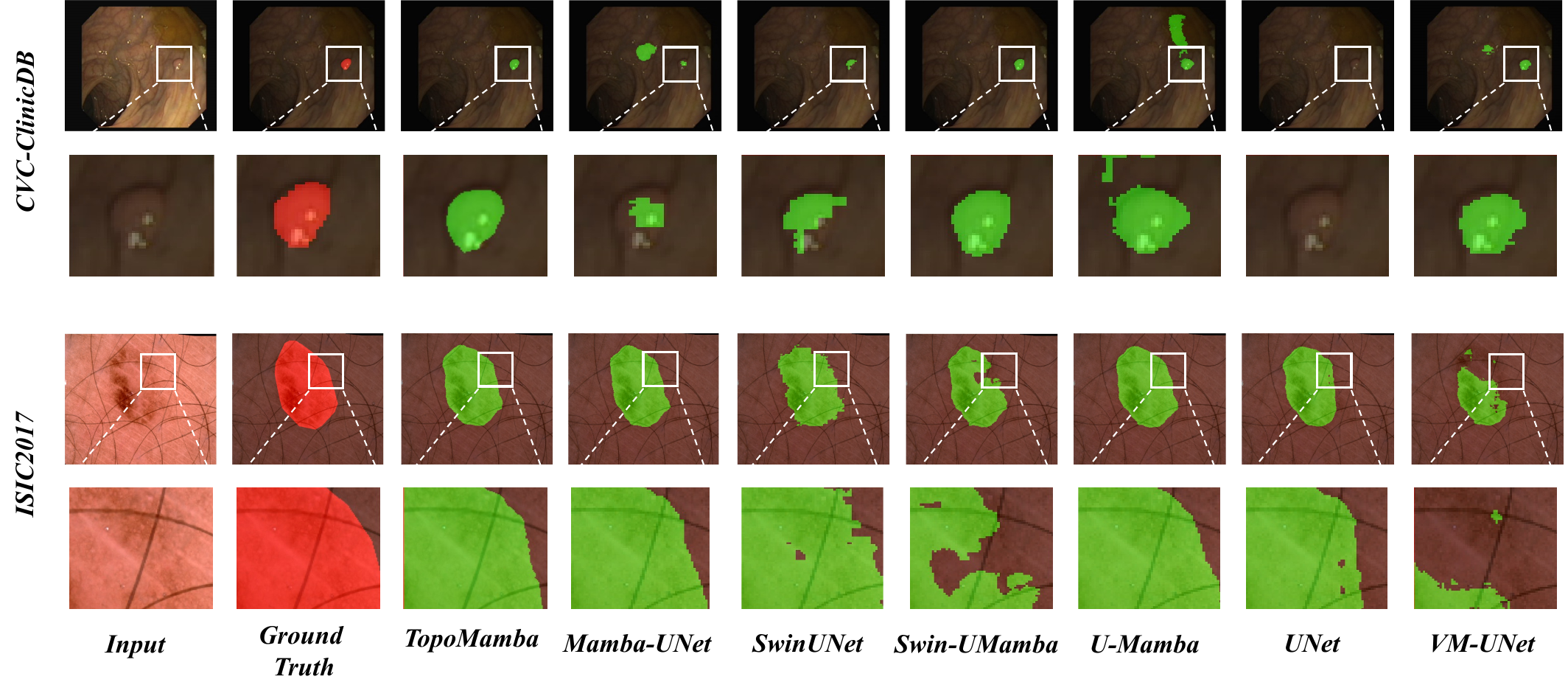}}
\Description{Qualitative comparisons on dermoscopy and endoscopy benchmarks showing that TopoMamba produces cleaner boundaries and fewer false positives than representative baselines.}
\caption{Representative ISIC 2017 and CVC-ClinicDB test cases (fixed random seed). Additional random cases are provided in the supplementary material.}
\label{fig:ISIC_CVC_vis}
\end{figure*}

% \begin{figure*}[ht]
% \centering
% \begingroup
% \setlength{\unitlength}{0.0098\linewidth}
% \begin{picture}(100,45.38)
% \put(0,0){\includegraphics[width=0.98\linewidth]{figs/ISIC_CVC_Vis.pdf}}
% \put(28.0,8.5){\makebox(0,0)[c]{\colorbox{gray!25}{\rule{0pt}{8.5mm}\hspace{22mm}}}}
% \put(28.0,8.5){\makebox(0,0)[c]{\colorbox{white}{\scriptsize\shortstack[c]{\textit{TopoMamba}\\\textit{Tiny}}}}}
% \end{picture}
% \endgroup
% \Description{Qualitative comparisons on dermoscopy and endoscopy benchmarks showing that TopoMamba produces cleaner boundaries and fewer false positives than representative baselines.}
% \caption{Segmentation results on ISIC2017  and CVC-ClinicDB datasets.}
% \label{fig:ISIC_CVC_vis}
% \end{figure*}

\subsection{HSIC Sensitivity}

Table~\ref{tab:hsic_sensitivity_acm} studies two HSIC Gate hyperparameters, namely the temperature $T$ and the projection size $k$, under 30-epoch scratch training. These hyperparameters are selected on the Synapse validation split only and then fixed for the main experiments. The absolute numbers are lower than those in Table~\ref{tab:synapse_results_acm} because this study uses short scratch training rather than the full training recipe.

Experimental results indicates that both overly sharp and overly smooth gating are suboptimal, and that excessively small or large projection dimensions are also unfavorable. Based on validation performance, we therefore adopt $T=1.5$ and $k=64$ in all main experiments.

% For temperature, Dice increases from 68.14 at $T=0.5$ to 69.57 at $T=1.0$, peaks at 71.44 at $T=1.5$, and then decreases to 68.54 and 67.64 at $T=2.0$ and $2.5$, respectively. For projection size, the corresponding scores are 64.42, 67.51, 71.44, and 69.04 for $k=16,32,64,128$. This pattern indicates that both overly sharp and overly smooth gating are suboptimal, and that excessively small or large projection dimensions are also unfavorable. Based on validation performance, we therefore adopt $T=1.5$ and $k=64$ in all main experiments.

\subsection{ScanCache Efficiency and Deployment}
\label{sec:scancache_efficiency}

Table~\ref{tab:scancache_acm_measured} summarizes the measured efficiency of ScanCache from two perspectives on 100 randomly sampled Synapse slices, matching the released measurement artifact. The upper block reports end-to-end latency under fixed input resolutions, with and without caching. For the dual-branch model at $256^2$, latency is reduced from 47.3 ms to 31.9 ms, corresponding to a 32.6\% reduction and 31.3 FPS. At $512^2$, latency drops from 72.1 ms to 37.6 ms, a 47.9\% reduction, yielding 26.6 FPS. For a single TopoA-Scan branch at $512^2$, latency decreases from 46.1 ms to 9.7 ms, corresponding to a 79.0\% reduction and 103.2 FPS. These measured reductions are practically important because TopoA-Scan relies on explicit scan indices, and index-service overhead therefore contributes directly to wall-clock runtime.

The lower block evaluates cache behavior under dynamic input resolutions. The cache hit rate is 99.07\% for fixed $512^2$, remains 98.15\% for a 50/50 mixture of $256^2$ and $512^2$, and is still 95.37\% for a five-scale recurring mixture. Even when every external input size is unique, the hit rate remains 86.48\%, with latency reduced from 55.4 ms to 37.5 ms. This is because repeated internal feature-map sizes still recur within a complete U-shaped forward pass, enabling effective index reuse despite variation in external resolution. Overall, ScanCache is not merely an implementation detail; it is a deployment-oriented mechanism that makes explicit topology-aware scanning substantially more practical in real segmentation pipelines.

\section{Visualization Analysis}
We provide qualitative results in Figures~\ref{fig:synapse_vis}, \ref{fig:ISIC_CVC_vis}, and \ref{fig:attention_heatmap} to complement the quantitative analysis. Figures~\ref{fig:synapse_vis} and \ref{fig:ISIC_CVC_vis} are sampled from the test set with a fixed random seed, with more random cases in the supplementary material. In Synapse, TopoMamba better preserves thin or oblique anatomy and reduces incomplete structures, local misclassification, and fragmented boundaries relative to strong baselines. In ISIC2017 and CVC-ClinicDB, it more reliably traces irregular lesion and polyp boundaries while suppressing distractors such as hair, illumination variation, and specular highlights, consistent with the specificity gains in Table~\ref{tab:comparison_results_acm}. Figure~\ref{fig:attention_heatmap} is treated as qualitative support only; the controlled ablations and topology-oriented analysis provide the main quantitative evidence for branch complementarity.

\section{Discussion}
\label{sec:discussion}

TopoMamba combines a topology-aware scan prior, dependence-aware fusion, and deployment-minded caching in a single SSM framework. Its main value for multimedia research is that the same scan-and-fuse design remains effective across heterogeneous medical visual media while also supporting a volumetric TopoMamba-3D result. We deliberately prioritize a compact fusion rule and efficient scan reuse over a larger multimodal alignment head or report-conditioned prompting stack. That choice keeps the method easier to analyze and deploy. More broadly, we expect the decomposition to remain useful beyond the exact datasets studied here. TopoA-style scan ordering injects a reusable non-axial structural bias, HSIC Gate offers a lightweight alternative to heavier fusion heads, and ScanCache promotes deployment details to architectural design choices.

\textbf{Limitations.}
The proposed architecture is designed to effectively exploit non-axial structural continuity through topology-aware scanning and lightweight dependence-aware fusion. While it achieves strong performance and favorable efficiency for heterogeneous medical segmentation, its applicability to settings that require explicit topology guarantees has not been validated in this work.

% All datasets used in this work are public research benchmarks and were used under their published research terms. The proposed model is intended for research evaluation only and is not clinically validated for diagnostic deployment; real-world use would require prospective validation, site-specific calibration, and clinician oversight.

\clearpage
\bibliographystyle{IEEEtran}
\bibliography{main}

\clearpage
% \clearpage
\setcounter{page}{1}
\appendix
\setcounter{figure}{0}
\setcounter{table}{0}
\setcounter{algorithm}{0}
\renewcommand{\thefigure}{S\arabic{figure}}
\renewcommand{\thetable}{S\arabic{table}}
\renewcommand{\thealgorithm}{S\arabic{algorithm}}

\section{Additional Method Details}

\subsection{Preliminaries of SSM}
State-space models (SSMs) describe sequential processing through a hidden-state evolution:
\begin{align}
\frac{d\mathbf{h}(t)}{dt} &= \mathbf{A}\mathbf{h}(t) + \mathbf{B}\mathbf{x}(t), \\
\mathbf{y}(t) &= \mathbf{C}\mathbf{h}(t) + \mathbf{D}\mathbf{x}(t),
\end{align}
where $\mathbf{h}(t)$, $\mathbf{x}(t)$, and $\mathbf{y}(t)$ denote the hidden state, input, and output, respectively. After discretization with step size $\Delta$, the recurrence becomes
\begin{align}
\mathbf{h}_k &= \overline{\mathbf{A}}\mathbf{h}_{k-1} + \overline{\mathbf{B}}\mathbf{x}_k, \\
\mathbf{y}_k &= \mathbf{C}\mathbf{h}_k + \mathbf{D}\mathbf{x}_k.
\end{align}
Vision-Mamba-style models serialize a 2D or 3D feature tensor into several scan sequences and then apply selective scan with input-dependent parameters. TopoMamba keeps the pretrained-compatible Cross-Scan branch and adds a complementary TopoA-Scan branch inside Efficient Topological Multi-Scan (ETMS).

\subsection{Cross-Scan Formulation}
Given a feature map $\mathbf{X} \in \mathbb{R}^{B \times C \times H \times W}$ with $L=HW$, Cross-Scan produces four axis-aligned traversal orders that correspond to row-major, column-major, and their reversals~\cite{105VMamba2024vmamba}. The resulting serialized tensor can be written as
\begin{align}
\mathbf{X}_{\text{cross}} &= \operatorname{CrossScan}(\mathbf{X}) \in \mathbb{R}^{B \times 4 \times C \times L}, \\
\mathbf{F}_{\text{cross}} &= \operatorname{SelectiveScan}(\mathbf{X}_{\text{cross}}, \boldsymbol{\Delta}, \mathbf{A}, \mathbf{B}, \mathbf{C}),
\end{align}
where the selective parameters are omitted for brevity.
This branch preserves compatibility with VMamba-style pretrained weights and remains sensitive to axis-aligned structures.

\subsection{Full Definition of TopoA-Scan}
For a feature map of size $(H, W)$ with $L=HW$, the topology-aware branch traverses both main diagonals and anti-diagonals. For each diagonal index $s \in \{0, \dots, H+W-2\}$, define
\begin{equation}
i_{\min}(s)=\max(0,\, s-(W-1)), \qquad i_{\max}(s)=\min(s,\, H-1).
\end{equation}
We first define the ordered coordinate lists
\begin{equation}
\mathbf{d}^{(s)} = \big[(i,\, s-i)\big]_{i=i_{\min}(s)}^{i_{\max}(s)},
\end{equation}
and
\begin{equation}
\mathbf{a}^{(s)} = \big[(i,\, W-1-(s-i))\big]_{i=i_{\min}(s)}^{i_{\max}(s)},
\end{equation}
where entries are ordered by increasing $i$. The alternating TopoA segment orders are then
\begin{equation}
\operatorname{diag}(s)=
\begin{cases}
\mathbf{d}^{(s)}, & s \text{ even},\\
\operatorname{reverse}(\mathbf{d}^{(s)}), & s \text{ odd},
\end{cases}
\end{equation}
and
\begin{equation}
\operatorname{antidiag}(s)=
\begin{cases}
\mathbf{a}^{(s)}, & s \text{ even},\\
\operatorname{reverse}(\mathbf{a}^{(s)}), & s \text{ odd}.
\end{cases}
\end{equation}
Concatenating all segments gives two full coordinate sequences:
\begin{equation}
\mathbf{p}_{\text{diag}}=\operatorname{concat}_{s=0}^{H+W-2}\operatorname{diag}(s), \qquad
\mathbf{p}_{\text{antidiag}}=\operatorname{concat}_{s=0}^{H+W-2}\operatorname{antidiag}(s).
\end{equation}
After flattening each coordinate $(i,j)$ into the row-major index $iW+j$, we obtain the integer index sequences $\mathbf{i}_{\text{diag}}, \mathbf{i}_{\text{antidiag}} \in \mathbb{Z}^{L}$ and construct the complete forward index tensor
\begin{equation}
\mathbf{I}_{\text{fwd}}=
\begin{bmatrix}
\mathbf{i}_{\text{diag}} \\
\mathbf{i}_{\text{antidiag}} \\
\operatorname{flip}(\mathbf{i}_{\text{diag}}) \\
\operatorname{flip}(\mathbf{i}_{\text{antidiag}})
\end{bmatrix}
\in \mathbb{Z}^{4 \times L}.
\end{equation}
Its inverse tensor is defined by
\begin{equation}
\mathbf{I}_{\text{inv}}[k,\mathbf{I}_{\text{fwd}}[k,j]] = j,\qquad k \in [0,3],\ j \in [0,L-1].
\end{equation}
Thus, features are first gathered by $\mathbf{I}_{\text{fwd}}$ for selective scan and later restored to raster order by $\mathbf{I}_{\text{inv}}$.

\begin{figure*}[t]
\centering
\includegraphics[width=0.98\linewidth]{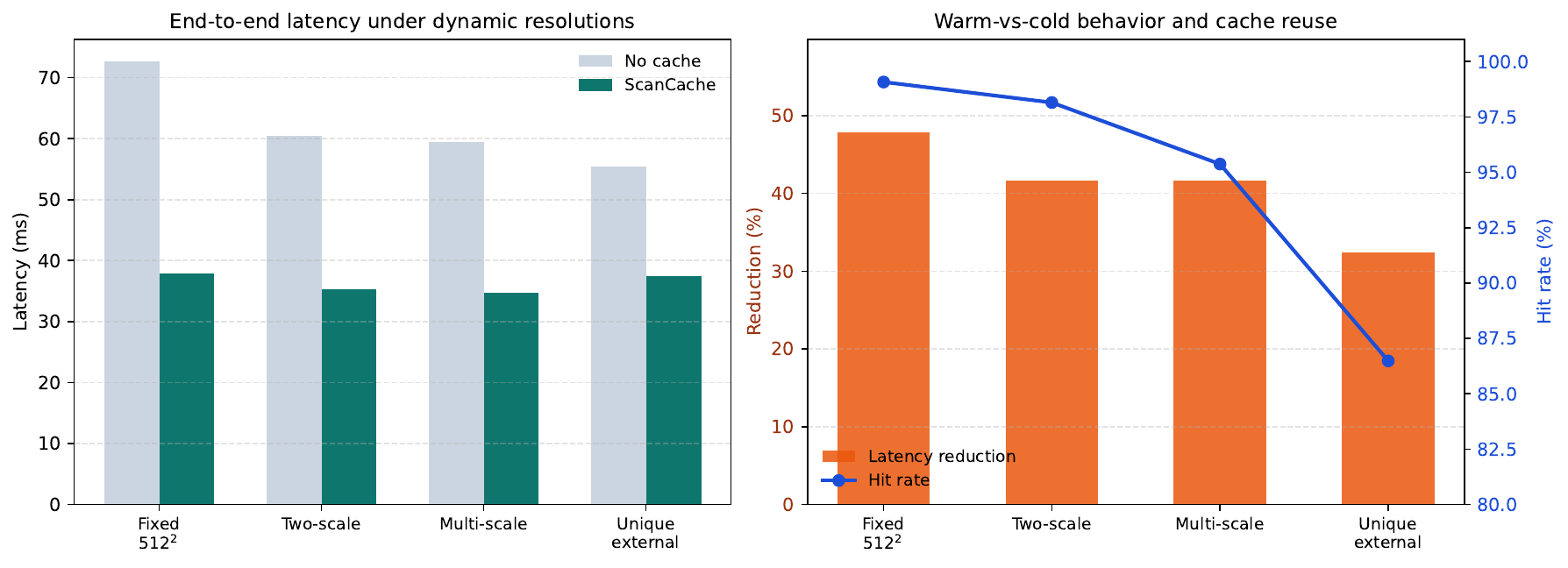}
\caption{Warm-vs-cold ScanCache behavior under dynamic input resolutions. The left panel compares end-to-end latency with and without caching. The right panel shows latency reduction together with cache hit rate for the same scenarios.}
\label{fig:scancache_dynamic_breakdown}
\end{figure*}

\begin{figure*}[ht]
\centering
\centerline{\includegraphics[width=0.97\textwidth]{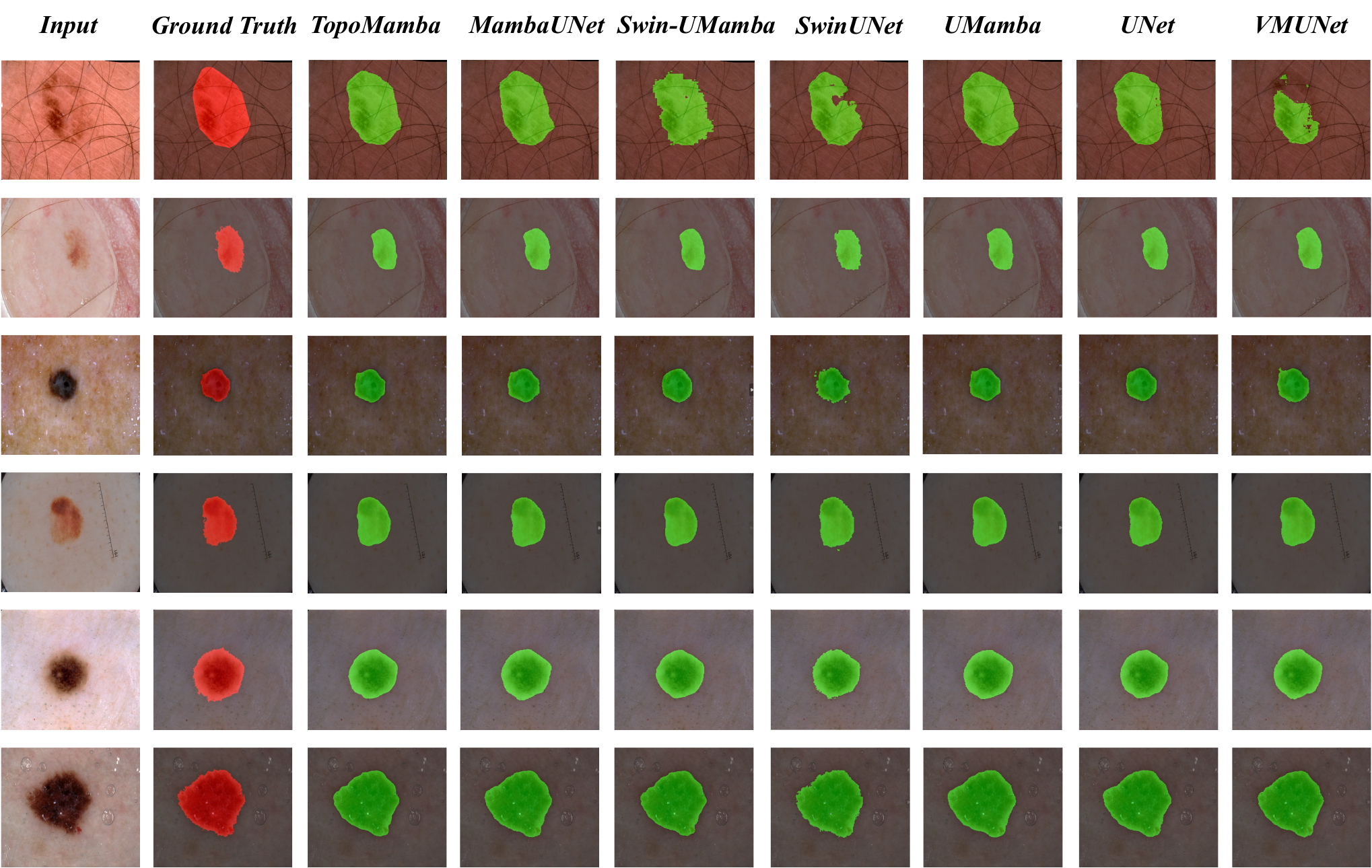}}
\Description{Additional qualitative dermoscopy examples showing that TopoMamba recovers lesion extent and boundaries more reliably than representative baselines.}
\caption{More Segmentation results on ISIC2017 dataset.}
\label{fig:ISIC_more_vis}
\end{figure*}

\begin{figure*}[ht]
\centering
\centerline{\includegraphics[width=0.97\textwidth]{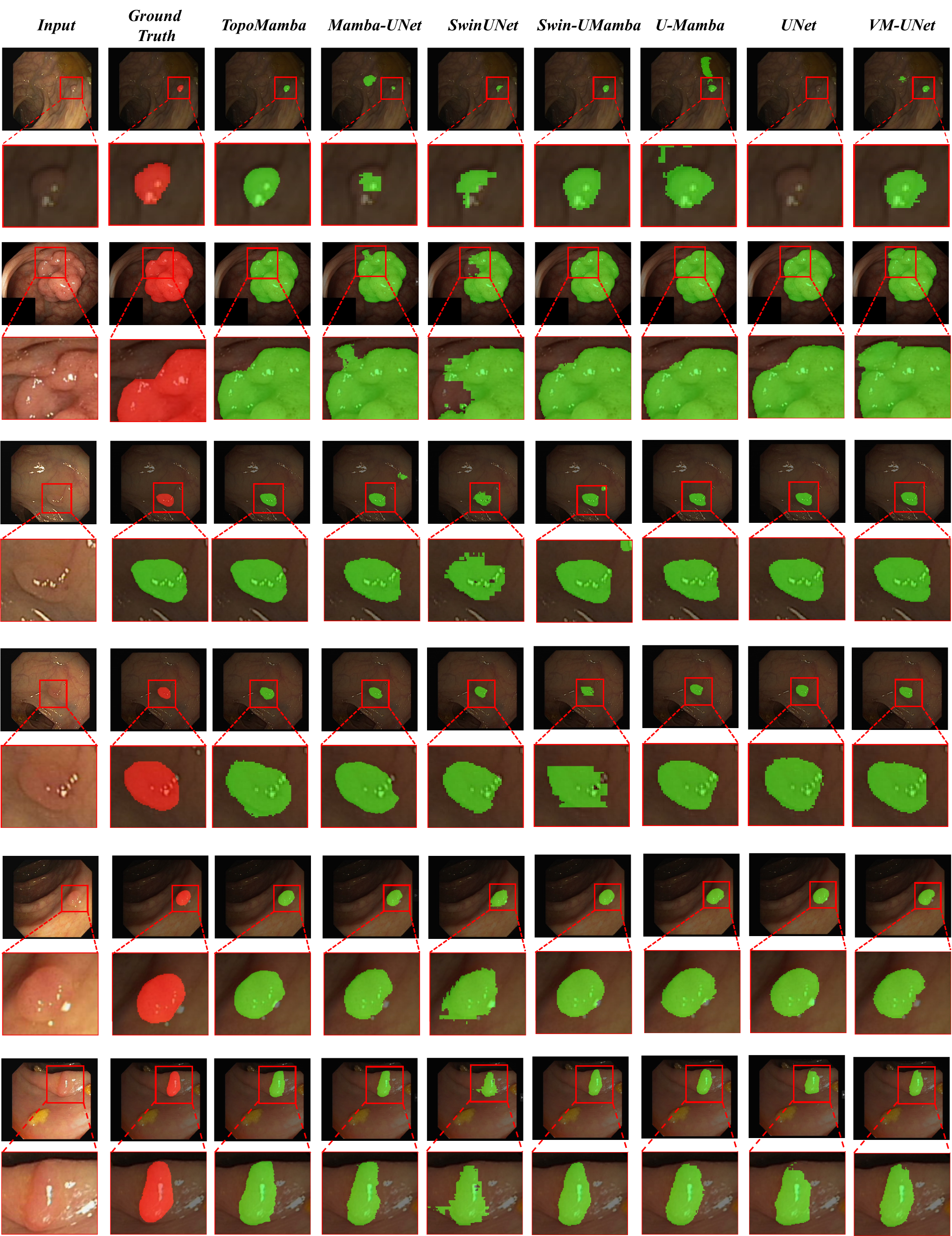}}
\Description{Additional qualitative endoscopy examples showing that TopoMamba produces cleaner polyp boundaries and fewer false positives than representative baselines.}
\caption{More Segmentation results on CVC-ClinicDB dataset.}
\label{fig:CVC_more_vis}
\end{figure*}

\subsection{Theoretical Properties of TopoA-Scan}

\begin{proposition}[Spatial locality of the base diagonal TopoA sequence]
Let $\mathcal{S}$ denote the full diagonal TopoA order obtained by concatenating all $\operatorname{diag}(s)$ segments. For any consecutive points $\mathbf{p}_m=(x_m,y_m)$ and $\mathbf{p}_{m+1}=(x_{m+1},y_{m+1})$ in $\mathcal{S}$,
\begin{equation}
\|\mathbf{p}_{m+1}-\mathbf{p}_m\|_2 \in \{1,\sqrt{2}\},
\end{equation}
and therefore $\|\mathbf{p}_{m+1}-\mathbf{p}_m\|_2 \le \sqrt{2}$.
\end{proposition}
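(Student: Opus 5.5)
The plan is to compute every consecutive pair explicitly, using the closed-form description of $\operatorname{diag}(s)$ from the full definition of TopoA-Scan: segment $s$ consists of the points $(i,s-i)$ for $i\in[i_{\min}(s),i_{\max}(s)]$, listed by increasing $i$ when $s$ is even and by decreasing $i$ when $s$ is odd. Every consecutive pair $(\mathbf{p}_m,\mathbf{p}_{m+1})$ in $\mathcal{S}$ is of exactly one of two kinds: both points lie in the same segment $\operatorname{diag}(s)$ (an \emph{intra-segment} step), or $\mathbf{p}_m$ is the last point of $\operatorname{diag}(s)$ and $\mathbf{p}_{m+1}$ is the first point of $\operatorname{diag}(s+1)$ (a \emph{transition} step). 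Every segment is nonempty, since $i_{\min}(s)\le i_{\max}(s)$ for all $s\in\{0,\dots,H+W-2\}$, so no segment is ever skipped. If $L=1$ there are no consecutive pairs and the claim is vacuous.

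\textbf{Intra-segment steps.} Inside a segment, consecutive entries differ by $i\mapsto i\pm1$ while $i+j=s$ stays fixed. The displacement is therefore $\pm(1,-1)$, of length $\sqrt{2}$.

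\textbf{Transition steps.} Split by the parity of $s$.

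If $s$ is even, $\operatorname{diag}(s)$ ends at $i=i_{\max}(s)$. The next segment $\operatorname{diag}(s+1)$ is reversed, so it starts at $i=i_{\max}(s+1)$.
\begin{itemize}
\item When $s+1\le H-1$, we have $i_{\max}(s)=s$ and $i_{\max}(s+1)=s+1$. The step goes from $(s,0)$ to $(s+1,0)$.
\item When $s\ge H-1$, both maxima equal $H-1$. The step goes from $(H-1,s-H+1)$ to $(H-1,s-H+2)$.
\end{itemize}

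If $s$ is odd, $\operatorname{diag}(s)$ is reversed and ends at $i=i_{\min}(s)$. The next segment starts at $i=i_{\min}(s+1)$.
\begin{itemize}
\item When $s+1\le W-1$, both minima are $0$. The step goes from $(0,s)$ to $(0,s+1)$.
\item When $s\ge W-1$, the minima are $s-W+1$ and $s-W+2$. The step goes from $(s-W+1,W-1)$ to $(s-W+2,W-1)$.
\end{itemize}

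In every case the displacement is a unit axis vector, of length $1$.

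\textbf{Main obstacle.} The only delicate point is checking that these cases are exhaustive, with no ``mixed'' case where one extremum clamps and the other does not. This holds because $s$ is an integer: either $s+1\le H-1$ or $s\ge H-1$, and likewise either $s+1\le W-1$ or $s\ge W-1$. One should also confirm that the parity convention (even segments forward, odd segments reversed) is exactly what places each transition at a grid boundary, namely the row $i=0$, the column $j=0$, the row $i=H-1$, or the column $j=W-1$. Combining the intra-segment and transition cases gives $\|\mathbf{p}_{m+1}-\mathbf{p}_m\|_2\in\{1,\sqrt2\}$, and hence the bound $\le\sqrt2$.
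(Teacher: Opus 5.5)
Your proof is correct and uses the same decomposition as the paper: intra-segment steps are $\pm(1,-1)$ of length $\sqrt{2}$, and segment transitions are unit axis steps of length $1$. The paper only asserts that the alternating reversal makes every transition a $(1,0)$ or $(0,1)$ step, whereas you verify it through the parity and clamping case split on $i_{\min}, i_{\max}$, and you also note that no segment is empty — this supplies exactly the detail the paper leaves implicit.
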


\begin{proof}
Inside a single diagonal segment, consecutive points differ by $(1,-1)$ when the traversal direction is forward and by $(-1,1)$ when the traversal direction is reversed, so the Euclidean distance is always $\sqrt{2}$. At the boundary between two consecutive diagonal segments, the alternating reversal ensures that the terminal point of one segment and the initial point of the next segment differ by either $(1,0)$ or $(0,1)$, hence they are 4-neighbors and their distance is $1$. These are the only two cases.
\end{proof}

\begin{corollary}[Extension to anti-diagonal and reversed TopoA sequences]
The same $\sqrt{2}$ locality bound holds for the anti-diagonal order and for both reversed orders in $\mathbf{I}_{\text{fwd}}$.
\end{corollary}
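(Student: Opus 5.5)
The plan is to reduce both claims to the Proposition through isometries of the grid and the fact that reversal preserves consecutive distances.

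\textbf{Reversed orders.} If $\mathcal{S}=(\mathbf{p}_0,\dots,\mathbf{p}_{L-1})$, then $\operatorname{flip}(\mathcal{S})=(\mathbf{p}_{L-1},\dots,\mathbf{p}_0)$. Its consecutive pairs are exactly the pairs $\{\mathbf{p}_m,\mathbf{p}_{m+1}\}$ of $\mathcal{S}$ in swapped order, and $\|\mathbf{p}_m-\mathbf{p}_{m+1}\|_2=\|\mathbf{p}_{m+1}-\mathbf{p}_m\|_2$. So the set of step lengths is unchanged. This handles $\operatorname{flip}(\operatorname{diag})$ immediately from the Proposition. The same argument handles $\operatorname{flip}(\operatorname{antidiag})$ once the anti-diagonal case is done.

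\textbf{Anti-diagonal order.} Let $\phi(i,j)=(i,W-1-j)$, the horizontal reflection of the $H\times W$ grid. It is a Euclidean isometry of $\mathbb{Z}^2$ that maps the grid onto itself. Comparing the definitions, $\mathbf{a}^{(s)}$ is the image under $\phi$ of $\mathbf{d}^{(s)}$, entrywise and with the same ordering by increasing $i$. The reversal rule depends only on the parity of $s$, which is the same in both definitions. Hence $\operatorname{antidiag}(s)=\phi(\operatorname{diag}(s))$ for every $s$, and concatenating gives $\mathbf{p}_{\text{antidiag}}=\phi(\mathbf{p}_{\text{diag}})$ pointwise. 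Since $\phi$ preserves distances, every consecutive step of the anti-diagonal order has the same length as the corresponding step of the diagonal order, so it lies in $\{1,\sqrt{2}\}$ and is at most $\sqrt{2}$.

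\textbf{Main obstacle.} The only substantive point is checking the identity $\operatorname{antidiag}(s)=\phi(\operatorname{diag}(s))$ segment by segment, including the segment boundaries. This works because $\mathbf{a}^{(s)}$ is indexed over the same range $i_{\min}(s)\le i\le i_{\max}(s)$ as $\mathbf{d}^{(s)}$, with the same ordering and the same parity rule. If one wants to be extra careful, there is a fallback that avoids the reflection: directly repeat the Proposition's case analysis. Within a segment the steps are $(\pm1,\pm1)$. At a boundary, the alternating reversal makes consecutive endpoints 4-neighbours, because $\phi$ sends the steps $(1,0)$ and $(0,1)$ to $(1,0)$ and $(0,-1)$.
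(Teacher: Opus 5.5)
Your proposal is correct and takes the same approach as the paper's proof: the reflection $j\mapsto W-1-j$ is an isometry carrying each $\operatorname{diag}(s)$ onto $\operatorname{antidiag}(s)$, and reversing a sequence leaves the set of consecutive step lengths unchanged. Your explicit check that $\mathbf{a}^{(s)}$ is the entrywise image of $\mathbf{d}^{(s)}$, over the same range $i_{\min}(s)\le i\le i_{\max}(s)$ and with the same parity rule, makes precise what the paper states in one line.
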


\begin{proof}
Replacing $j$ with $W-1-j$ maps main diagonals to anti-diagonals without changing pairwise Euclidean distances. Reversing a finite sequence preserves the set of adjacent pair distances. Hence all four TopoA directions share the same locality bound.
\end{proof}

\begin{proposition}[Permutation property and bijectivity]
Each row of $\mathbf{I}_{\text{fwd}}$ is a permutation of $\{0,\dots,L-1\}$; equivalently, each row defines a bijection between scan positions and raster indices.
\end{proposition}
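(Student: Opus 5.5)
The plan is to show that each of the two base sequences $\mathbf{i}_{\text{diag}}$ and $\mathbf{i}_{\text{antidiag}}$ visits every raster index exactly once. The two flipped rows then follow at once, because reversing a finite sequence permutes its positions and leaves its multiset of entries unchanged.

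For $\mathbf{i}_{\text{diag}}$, I would first argue that the segments $\mathbf{d}^{(s)}$, $s=0,\dots,H+W-2$, partition the grid $\{0,\dots,H-1\}\times\{0,\dots,W-1\}$.
\begin{itemize}
\item \emph{Coverage.} Any grid point $(i,j)$ has $s=i+j\in[0,H+W-2]$. Moreover $i_{\min}(s)=\max(0,s-(W-1))\le i\le \min(s,H-1)=i_{\max}(s)$, since $j\le W-1$ gives $i\ge s-(W-1)$ and $j\ge 0$ gives $i\le s$. Hence $(i,j)=(i,s-i)$ appears in $\mathbf{d}^{(s)}$.
\item \emph{Disjointness.} Every entry of $\mathbf{d}^{(s)}$ has coordinate sum exactly $s$, so distinct segments share no point.
\item \emph{No repeats within a segment.} Inside one segment, entries are indexed by strictly increasing $i$, so they are pairwise distinct. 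Applying $\operatorname{reverse}$ for odd $s$ does not change this.
\end{itemize}
Consequently the concatenation $\mathbf{p}_{\text{diag}}$ lists each of the $HW$ grid points exactly once. As a consistency check, its length is $\sum_s (i_{\max}(s)-i_{\min}(s)+1)=HW$.

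Next, the flattening map $(i,j)\mapsto iW+j$ is a bijection from the grid onto $\{0,\dots,L-1\}$, because it is the base-$W$ representation with $0\le j<W$. Composing the two bijections shows that $\mathbf{i}_{\text{diag}}$ is a permutation.

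For the anti-diagonal row, I would reuse the reflection from the Corollary: $\phi(i,j)=(i,W-1-j)$ is an involution of the grid and satisfies $\mathbf{a}^{(s)}=\phi(\mathbf{d}^{(s)})$ entrywise. Therefore $\mathbf{p}_{\text{antidiag}}=\phi(\mathbf{p}_{\text{diag}})$ is again an enumeration of the grid without repetition, and flattening again yields a permutation.

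Finally, since each row is a permutation, the defining relation $\mathbf{I}_{\text{inv}}[k,\mathbf{I}_{\text{fwd}}[k,j]]=j$ assigns every entry of $\mathbf{I}_{\text{inv}}[k,\cdot]$ exactly once and without conflict. This makes $\mathbf{I}_{\text{inv}}[k,\cdot]$ the two-sided inverse of $\mathbf{I}_{\text{fwd}}[k,\cdot]$, which gives the stated bijection between scan positions and raster indices. The only step that needs care is checking the bounds $i_{\min}$ and $i_{\max}$ in the coverage argument; everything else is bookkeeping.
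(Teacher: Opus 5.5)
Your proof is correct and takes the same route as the paper. Every grid point $(i,j)$ lies in exactly one diagonal segment ($s=i+j$) and occupies one slot there, the anti-diagonal row is handled the same way, and the two flipped rows keep the permutation property because reversal does not change the entries. Your version is more explicit than the paper's: you check the $i_{\min}(s)$, $i_{\max}(s)$ bounds, you note that $(i,j)\mapsto iW+j$ is a bijection, and you transfer the diagonal case to the anti-diagonal one with the reflection $(i,j)\mapsto(i,W-1-j)$ instead of re-indexing by $s=i+(W-1-j)$; your closing remark on $\mathbf{I}_{\text{inv}}$ is correct but not needed for the statement.
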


\begin{proof}
For the main-diagonal row, every spatial location $(i,j)$ belongs to exactly one diagonal indexed by $s=i+j$, and within that diagonal it receives exactly one local position. Concatenating all diagonal segments therefore visits each spatial location exactly once. The same argument applies to the anti-diagonal row, where the relevant segment index is $s=i+(W-1-j)$. The remaining two rows are obtained by applying \texttt{flip} to already bijective sequences, which preserves the permutation property. Hence every row of $\mathbf{I}_{\text{fwd}}$ is a permutation of $\{0,\dots,L-1\}$.
\end{proof}

\begin{proposition}[Correctness of $(\mathbf{I}_{\text{fwd}}, \mathbf{I}_{\text{inv}})$]
Let $\mathbf{z} \in \mathbb{R}^{L}$ be any flattened feature vector and let $\mathbf{z}_{k}^{\text{scan}} = \mathbf{z}[\mathbf{I}_{\text{fwd}}[k,:]]$. Then $\mathbf{z}_{k}^{\text{scan}}[\mathbf{I}_{\text{inv}}[k,:]] = \mathbf{z}$ for every $k \in [0,3]$.
\end{proposition}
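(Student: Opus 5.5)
The plan is to reduce the statement to the permutation property established in the preceding proposition, and then verify the identity one coordinate at a time. Fix $k \in [0,3]$ and write $\pi = \mathbf{I}_{\text{fwd}}[k,:]$ and $\tau = \mathbf{I}_{\text{inv}}[k,:]$. Both are viewed as maps $\{0,\dots,L-1\}\to\{0,\dots,L-1\}$. In this notation $\mathbf{z}_k^{\text{scan}}[j] = \mathbf{z}[\pi(j)]$, and the claim is that $\mathbf{z}[\pi(\tau(r))] = \mathbf{z}[r]$ for every raster index $r$.

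The first step is to check that $\tau$ is well defined and total. The defining relation $\tau(\pi(j)) = j$ assigns a value at $\pi(j)$ for each $j$. By the previous proposition, $\pi$ is a bijection, so every $r$ equals $\pi(j)$ for exactly one $j$. Hence every entry of $\tau$ is written, and it is written exactly once. This is the same observation that makes the ScanCache loop, which writes $\mathbf{I}_{\text{inv}}[m,\mathbf{I}_{\text{fwd}}[m,j]] \leftarrow j$, leave no entry of the allocated array uninitialized. It follows that $\tau = \pi^{-1}$, meaning $\tau\circ\pi = \mathrm{id}$.

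The second step uses the fact that $\pi$ is a bijection on a finite set. A left inverse of a bijection is also a right inverse, so $\pi\circ\tau = \mathrm{id}$ as well. Concretely, given $r$, pick the unique $j$ with $\pi(j)=r$. Then $\tau(r) = j$, so $\pi(\tau(r)) = \pi(j) = r$. Substituting gives $\mathbf{z}_k^{\text{scan}}[\tau(r)] = \mathbf{z}[\pi(\tau(r))] = \mathbf{z}[r]$ for all $r$, which is the desired equality. Since $k$ was arbitrary, the identity holds for all four rows.

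The only real obstacle is the well-definedness and surjectivity in the first step. Without the permutation property, $\tau$ could contain unassigned entries, or contradictory assignments if indices repeated. This is exactly why the argument relies on the preceding proposition rather than on the defining formula for $\mathbf{I}_{\text{inv}}$ alone. Everything after that is routine index bookkeeping. I would also remark that the argument is linear and applies channelwise, so it extends directly to the gather/scatter on $\mathbf{X}\in\mathbb{R}^{B\times C\times L}$.
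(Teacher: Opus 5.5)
Your proof is correct and follows essentially the same route as the paper: fix a raster index $\ell$, identify the scan position $j$ with $\mathbf{I}_{\text{fwd}}[k,j]=\ell$ and $\mathbf{I}_{\text{inv}}[k,\ell]=j$, and trace the gather to obtain $\mathbf{z}[\ell]$. The one difference is that you explicitly invoke the preceding permutation proposition to show $\mathbf{I}_{\text{inv}}$ is well defined and total, whereas the paper's phrase ``by construction of the inverse index'' relies on that fact only implicitly.
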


\begin{proof}
Fix any $k$ and any raster index $\ell \in \{0,\dots,L-1\}$. Let $j=\mathbf{I}_{\text{inv}}[k,\ell]$. By construction of the inverse index, $\mathbf{I}_{\text{fwd}}[k,j]=\ell$. Therefore,
\begin{equation}
\mathbf{z}_{k}^{\text{scan}}[\mathbf{I}_{\text{inv}}[k,\ell]]
=
\mathbf{z}_{k}^{\text{scan}}[j]
=
\mathbf{z}[\mathbf{I}_{\text{fwd}}[k,j]]
=
\mathbf{z}[\ell].
\end{equation}
Since this holds for every $\ell$, exact restoration follows.
\end{proof}

\subsection{HSIC Gate: Implementation and Calibration}

The HSIC Gate keeps only a lightweight scalar calibration parameter while comparing the statistics of $\mathbf{F}_{\text{cross}}$ and $\mathbf{F}_{\text{TopoA}}$. The implementation follows four steps:
\begin{enumerate}
    \item Apply Johnson-Lindenstrauss projection~\cite{150HSIC_projection_johnson1984extensions,177JL_Proj2003database} to reduce the sequence dimension before kernel construction.
    \item Build RBF kernels~\cite{178RBF_scholkopf2002learning} with a median bandwidth heuristic on the projected channel descriptors.
    \item Center the kernels by row mean, column mean, and global mean before computing the normalized Frobenius inner product.
    \item Convert the resulting HSIC score into a sigmoid gate and retain a TopoA-biased residual shortcut for stability.
\end{enumerate}
We intentionally avoid stronger claims such as target-variable dependence, mutual-information approximation, or topology guarantees, because the gate is used here purely as a compact dependence-aware fusion rule.
In the paper setting, $\alpha$ is initialized to $0.5$, the temperature is fixed to $T=1.5$, and the projection cap is set to $k=64$ after validation-only sensitivity analysis.

\subsection{ScanCache: Implementation and Amortized Analysis}

Figure~\ref{fig:scancache_dynamic_breakdown} complements the deployment analysis in the main paper by visualizing ScanCache behavior under recurring and dynamic input resolutions. The key message is that the measured gain does not come from changing the selective-scan computation itself, but from amortizing explicit scan-index service across repeated internal feature-map sizes. Even when the external image size varies, the internal resolutions inside a complete U-shaped forward pass still recur, so cached topology-aware scan indices can be reused effectively. This is why ScanCache should be interpreted as a deployment-oriented systems component that improves the practical usability of explicit TopoA-Scan rather than as a change to the segmentation objective or backbone capacity.

\section{Additional Qualitative Results}

These pages extend the fixed-random-seed qualitative protocol of the main paper with additional random test cases from ISIC 2017 and CVC-ClinicDB. Consistent with the quantitative results, TopoMamba more reliably follows irregular lesion and polyp boundaries while reducing distractor responses caused by hair occlusion, illumination variation, specular highlights, and surrounding textures. Because dermoscopy and endoscopy present different appearance statistics and different types of false positives, these additional pages provide a direct visual check that the same scan-and-fuse design transfers across heterogeneous medical visual media. These qualitative examples are intended as supporting evidence only; the main claims about cross-media generalization remain grounded in the controlled quantitative results and analysis reported in the paper.

\section{Experimental Details and Reproducibility}

\subsection{Datasets, Splits, and Preprocessing}
Our implementation follows the following data conventions.
\begin{itemize}
    \item \textbf{Synapse.} We follow the standard 18/12 patient split used in the main paper and report Dice and HD95 on held-out volumes. For the 2D implementation, the training set contains 3779 preprocessed slices and the test set contains 12 held-out volumes. During training, random rotation/flip or random rotation is applied, slices are resized to the target input size, and single-channel CT slices are repeated to three channels when pretrained 2D backbones are used.
    \item \textbf{ISIC2017.} We follow the standard \texttt{train/val/test} split. The preprocessing pipeline applies random rotation/flip, random rotation, color jitter, resize to $512^2$, and normalization to $[0,1]$.
    \item \textbf{CVC-ClinicDB.} We follow a 5-fold cross-validation protocol with \texttt{random\_state=42}. The preprocessing pipeline applies random rotation/flip, random rotation, color jitter, resize to $512^2$, and normalization.
\end{itemize}
Evaluation follows the main paper: Dice and HD95 are reported on Synapse, while DSC, mIoU, and specificity are reported on ISIC2017 and CVC-ClinicDB. Unless otherwise stated, the main 2D experiments follow the paper protocol with AdamW, cosine decay, three random seeds, and input size $512^2$.

\subsection{Latency Measurement Protocol}
To match Table~7 in the main paper, latency is measured on 100 randomly sampled Synapse test slices with batch size $1$ on one NVIDIA GeForce RTX 4090 (24 GB), Python 3.11.5, PyTorch 2.1.2+cu121, and CUDA 12.1. We perform 8 warm-up forwards and then time one forward pass per sample. The measured latency includes the per-sample host-to-device tensor transfer inside the evaluation loop, but excludes disk I/O and dataloader startup. The dynamic-resolution evaluation follows four scenarios: fixed $512^2$, a 50/50 mixture of $256^2$ and $512^2$, a five-scale recurring mixture, and a unique external-size rule \texttt{side\_i = 256 + 2i}. This protocol is reported separately from the amortized proposition in Section A.6: the proposition concerns index service alone, whereas the measured latency also includes reorder, selective scan, decoder computation, and transfer overhead.

\end{document}